\documentclass[journal]{IEEEtran}
\usepackage{amsmath,amsfonts}
\usepackage{algorithmic}
\usepackage{algorithm}
\usepackage{array}
\usepackage[caption=false,font=normalsize,labelfont=rm,textfont=rm]{subfig}
\usepackage{textcomp}
\usepackage{stfloats}
\usepackage{url}
\usepackage{verbatim}
\usepackage{graphicx}
\usepackage{cite}
\usepackage{balance}
\usepackage{bm}
\usepackage{makecell}
\usepackage{xr-hyper}
\usepackage{hyperref}
\usepackage{utfsym}
\IfFileExists{orcidlink.sty}{%
    \usepackage{orcidlink}
}{%
    \newcommand{\orcidlink}[1]{}
}
\usepackage{multirow}
\usepackage{booktabs}
\newtheorem{proposition}{Proposition}

\begin{document}

\title{Zero-Shot Scalable Resilience in UAV Swarms: \\A Decentralized Imitation Learning Framework with Physics-Informed Graph Interactions}


\author{Huan Lin~\orcidlink{0009-0002-7217-6139}, Chun Yang, Lianghui Ding~\orcidlink{0000-0002-3231-3613 },~\IEEEmembership{Member,~IEEE}, and Yulin Qiu
\thanks{This paper was supported in part by Shanghai Key Laboratory Funding under Grant STCSM15DZ2270400, and in part by the Program for Professor of Special Appointment (Eastern Scholar) at Shanghai Institutions of Higher Learning. (\textit{Corresponding author: Lianghui Ding.})}
\thanks{Huan Lin, Chun Yang, Lianghui Ding, and Yulin Qiu are with the Institute of Image Communication and Network Engineering, School of Integrated Circuits, Shanghai Jiao Tong University, Shanghai 200240, China. Email: lhzt715@sjtu.edu.cn; yangchun123@sjtu.edu.cn; lhding@sjtu.edu.cn; qyl7@sjtu.edu.cn.}
}



\maketitle

\begin{abstract}
Large-scale UAV failures can fragment a swarm network into disconnected sub-networks, making connectivity recovery both urgent and difficult. Centralized recovery methods rely on global topology information and become difficult to deploy after severe fragmentation, while decentralized heuristics and existing learning-based methods still degrade under wide variations in swarm scale and damage severity. We present Physics-informed Graph Adversarial Imitation Learning (PhyGAIL), a recovery framework trained under centralized training and decentralized execution that adapts graph-based MARL at three coupled stages. First, bounded local interaction graphs provide scale-stable policy inputs from heterogeneous local observations. Second, a physics-gated graph neural network explicitly models attraction and repulsion interactions for reconnection and collision avoidance. Third, scenario-adaptive imitation learning combines an offline composite expert database, dense GAIL rewards, and expert-normalized temporal rewards to stabilize training under sparse terminal feedback and variable recovery horizons. Structural analysis further explains why PhyGAIL supports scale-free adaptation and stable recovery learning in large damaged swarms. In large-scale simulations, a policy trained on 20-UAV swarms transfers directly to swarms of up to 500 UAVs without fine-tuning and achieves superior reconnection reliability, recovery efficiency, motion safety, and runtime efficiency over representative baselines.
\end{abstract}

\begin{IEEEkeywords}
Resilient network, UAV swarm, multi-agent reinforcement learning, graph neural network.
\end{IEEEkeywords}

\section{Introduction}
\IEEEPARstart{R}{ecent} advances in sensing, communication, and autonomous control have made unmanned aerial vehicle (UAV) swarms increasingly practical for surveillance, search and rescue, and emergency communication \cite{javed2024state, horyna2023decentralized}. In these applications, multiple UAVs cooperate through a UAV swarm network (USNET). The network, however, is fragile. Environmental hazards, coordinated failures, or hostile interference may disable many UAVs in a single event, splitting the swarm into disconnected sub-networks and triggering the communication network split (CNS) problem \cite{phadke2022towards}.

The CNS problem has long been studied through mobility-assisted connectivity restoration in wireless sensor, mobile ad hoc, and UAV networks \cite{lee2010recovery, lee2015connectivity, shankar2008hybrid, zhang2018autonomous, akkaya2013handling, chouikhi2017centralized}. For large-scale UAV swarms, existing solutions are broadly either centralized or decentralized, with most relying on hand-crafted recovery rules. Centralized recovery plans motion from global topology information, which becomes communication-intensive and increasingly impractical when severe fragmentation prevents efficient global state collection. Decentralized heuristic methods act on local observations and are easier to deploy after a split, but they typically follow fixed recovery rules that adapt poorly to large variations in topology and damage severity.

Multi-agent reinforcement learning (MARL) offers a more adaptive alternative \cite{li2025fast, wang2025topology} because CNS recovery is a closed-loop continuous-control problem rather than a one-shot path planning task. Each UAV acts from partial local observations, and its motion changes inter-UAV distances, reshapes the communication graph, and affects the next-step observations. A learned decentralized policy is therefore attractive because it can adapt online as fragmentation evolves, instead of relying on fixed motion rules or repeatedly solving a centralized recovery plan after every topology change.

This perspective reveals three coupled bottlenecks in adapting graph MARL to damaged USNETs. First, if the graph receptive field grows with swarm size, the number of neighbors, the message range, and the aggregated feature distribution all change, so a policy trained on small swarms can face an out-of-distribution local decision context in larger swarms \cite{yehudai2021local}. Second, local neighbors can imply opposite interaction semantics, pulling disconnected subnets together for reconnection while pushing nearby UAVs apart for collision avoidance. Existing graph aggregation, however, mainly learns which neighbors matter, rather than whether they should induce attraction or repulsion. Third, recovery learning becomes increasingly difficult because success feedback is sparse and heavier damage creates much longer, harder reconnection episodes. Successful reconnection is observed only after separated components merge, so before success the main feedback is often the step penalty, which can dominate long and difficult episodes. Imitation learning \cite{ho2016generative} can densify guidance, but it must still account for scenario difficulty.

We address these issues with a Physics-informed Graph Adversarial Imitation Learning algorithm (PhyGAIL), a decentralized recovery framework trained under centralized training and decentralized execution (CTDE). Rather than stacking CTDE, graph neural encoding, and imitation learning as separate components, PhyGAIL adapts graph MARL at three coupled stages: bounded graph construction, physics-gated interaction modeling, and expert-guided imitation training. A bounded local interaction graph keeps the policy input scale-stable and limits its dependence on global swarm size. Physics-informed graph neural network (PhyGNN) uses attraction and repulsion gates to encode reconnection and safety cues directly in message passing, so that the policy can distinguish these opposite interaction semantics directly through the architecture rather than from sparse environmental feedback alone. An expert-guided imitation strategy combines an offline composite expert database, dense GAIL rewards, and expert-normalized temporal rewards to stabilize training across variable damage levels without giving the decentralized test-time actor extra global information. Structural analysis of the proposed architecture provides theoretical support for PhyGAIL's scale-free adaptation and stable recovery learning. In large-scale simulations with 50 disconnected post-damage cases per setting, PhyGAIL reduces the average recovery time by 24.1\% relative to the best non-PhyGAIL baseline, while maintaining strong reconnection reliability, motion safety, and runtime efficiency.

The main contributions of this paper are as follows:
\begin{enumerate}
\item \textbf{A scale-invariant decentralized recovery formulation for zero-shot transfer.}
We enable zero-shot transfer, i.e., direct deployment without retraining or fine-tuning, from 20-UAV training swarms to networks of up to 500 UAVs. This is achieved by formulating CNS recovery as a decentralized continuous-control problem under the CTDE paradigm and constructing bounded local interaction graphs for policy execution.

\item \textbf{A physics-gated graph neural network for resolving neighbor interaction ambiguity.}
We propose PhyGNN to separate attraction from repulsion through dual-gated message passing on local interaction edges. This design provides a physically meaningful coordination prior for reconnection and collision avoidance, reducing the burden of learning opposite motion semantics from sparse delayed rewards.

\item \textbf{An expert-guided imitation learning strategy for stable variable-damage training.}
We construct an offline expert database by running multiple baseline methods across diverse damage severities and selecting the fastest successful trajectory for each scenario. Based on this database, PhyGAIL combines dense GAIL rewards with expert-normalized temporal rewards, improving credit assignment and reward balance across variable-length recovery tasks.
\end{enumerate}

The remainder of this paper is organized as follows. Section \uppercase\expandafter{\romannumeral2} reviews the related work. Section \uppercase\expandafter{\romannumeral3} formulates the system model, conceptualizing the USNET graph and the CNS problem. Section \uppercase\expandafter{\romannumeral4} details the architectural design and learning paradigm of the proposed algorithm. Extensive simulation results and performance evaluations are discussed in Sections \uppercase\expandafter{\romannumeral5}, followed by the conclusions in Section \uppercase\expandafter{\romannumeral6}.

\section{Related Work}

Existing mobility-based solutions to the Communication Network Split (CNS) problem can be grouped into three categories: centralized recovery algorithms, decentralized heuristic methods, and learning-based approaches. Table~\ref{tab:related_work_comparison} summarizes the main CNS recovery families using labels that are detailed subsequently in this section.

\subsection{Centralized Optimization Algorithms}
Early studies treated connectivity restoration as a centralized optimization problem and assumed whole-swarm topology information at decision time. Senel and Younis \cite{senel2011optimized}, Chouikhi \textit{et al.} \cite{chouikhi2017centralized}, and Zhang \textit{et al.} \cite{zhang2019hybrid} respectively studied relay placement, critical-node replacement, and hybrid optimization in partitioned networks. More recently, graph-learning methods such as CR-MGC \cite{mou2022resilient} and DEMD \cite{lin2024multihop} further improved centralized recovery. Mou \textit{et al.} used graph convolution in CR-MGC to iteratively generate recovery motions from the post-damage global topology. Lin \textit{et al.} further proposed DEMD to inject explicit damage cues into global graph reasoning for multihop reconnection. Their common limitation is that they still rely on post-damage whole-swarm information and centralized inference over the full graph. Such dependence is effective when global state can be collected reliably, but becomes harder to sustain after severe fragmentation, precisely where decentralized recovery is most needed.

\subsection{Decentralized Heuristic Methods}
After a communication network split, complete global topology is often unavailable or unreliable. Decentralized heuristic methods therefore rely on local or intra-subnet information. One line uses critical-node replacement, such as volunteer-instigated relocation \cite{imran2010volunteer}, LeDir \cite{abbasi2012recovering}, and DCRMF \cite{zhang2018autonomous}. These methods are simple and effective in single-node or mildly damaged settings, but repeated relocation can trigger cascaded motion and coverage loss under more severe failures. Another line uses reference-direction or potential-field coordination, such as HERO \cite{mi2011hero} and SIDR \cite{chen2020sidr}. These methods are lightweight and deployment-friendly, but their recovery logic remains rule-based and becomes less adaptive under large topology and damage variations.

\subsection{Reinforcement Learning-Based Approaches}
Reinforcement learning (RL) is attractive because CNS recovery is inherently a sequential continuous-control problem under time-varying topology. Each motion decision changes inter-UAV geometry and future connectivity, so the policy must respond to an evolving recovery state rather than execute a fixed relocation rule. Wang \textit{et al.} \cite{wang2025topology} proposed GDR-TS, which learns topology-sensitive recovery policies from global swarm information through centralized graph construction. Li \textit{et al.} \cite{li2025fast} proposed CR-MA, which follows a CTDE-style MARL design and combines artificial potential fields with learned control to support more decentralized execution and explicit collision avoidance.

The baselines mentioned above suggest that graph-based RL, CTDE, and collision-aware priors are useful for CNS recovery. However, they do not fully resolve large-scale adaptation. GDR-TS remains tied to global graph construction, whereas CR-MA supports more decentralized execution but still inherits scale-dependent local graph inputs and relies heavily on hand-crafted interaction bias. More broadly, sparse terminal rewards and variable recovery horizons motivate stronger imitation-learning guidance \cite{ho2016generative}. These gaps motivate PhyGAIL, whose design targets graph construction, interaction modeling, and training stability jointly.

\begin{table*}[!t]
\centering
\small
\caption{Representative CNS recovery methods.}
\label{tab:related_work_comparison}
\renewcommand{\arraystretch}{1.1}
\setlength{\tabcolsep}{4pt}
\resizebox{0.98\textwidth}{!}{
\begin{tabular}{lcccccc}
\toprule
Method & Year & Observation Scope$^{\dagger}$ & Swarm Scale & Damage Scale$^{\ddagger}$ & Collision Avoidance & Recovery Paradigm \\
\midrule
\multicolumn{7}{l}{\textit{Centralized optimization}} \\
CIST \cite{senel2011optimized} & 2011 & Global & 15 & Single-node & \usym{2613} & Relay node placement \\
RNFR \cite{chouikhi2017centralized} & 2017 & Global & 250 & Single-node & \usym{2613} & Critical-node replacement \\
HPC-CR \cite{zhang2019hybrid} & 2019 & Global & 140 & Single-node & \usym{2613} & Critical-node replacement \\
CR-MGC \cite{mou2022resilient} & 2022 & Global & 200 & Large-scale & \usym{2613} & GNN-based online iteration \\
DEMD \cite{lin2024multihop} & 2024 & Global & 200 & Large-scale & \usym{2613} & GNN-based online iteration \\
\midrule
\multicolumn{7}{l}{\textit{Decentralized heuristic}} \\
HERO \cite{mi2011hero} & 2011 & Intra-subnet & 32 & Small-scale & $\checkmark$ & Potential field coordination \\
LeDir \cite{abbasi2012recovering} & 2013 & Local & 100 & Single-node & \usym{2613} & Critical-node replacement \\
DCRMF \cite{zhang2018autonomous} & 2018 & Local & 80 & Single-node & \usym{2613} & Critical-node replacement \\
SIDR \cite{chen2020sidr} & 2020 & Intra-subnet & 100 & Large-scale & $\checkmark$ & Potential field coordination \\
\midrule
\multicolumn{7}{l}{\textit{Reinforcement learning-based}} \\
GDR-TS \cite{wang2025topology} & 2025 & Global & 100 & Large-scale & \usym{2613} & GNN-based online CTCE RL \\
CR-MA \cite{li2025fast} & 2025 & Intra-subnet & 20 & Large-scale & $\checkmark$ & CTDE RL + Potential field \\
PhyGAIL (ours) & - & Intra-subnet & 500 & Large-scale & $\checkmark$ & GNN-based zero-shot CTDE RL\\
\bottomrule
\end{tabular}}
\vspace{0.4em}
\parbox{0.98\textwidth}{\footnotesize $^{\dagger}$ Observation scope is grouped as global (whole-swarm topology access), intra-subnet (information shared within the surviving connected component), and local (purely node-neighbor observations).\\$^{\ddagger}$ Damage scale is grouped as single-node, small-scale ($\leq 20\%$ failed nodes), and large-scale ($\geq 90\%$ failed nodes).}
\end{table*}


\section{System model}
This section defines the mathematical models for UAV swarm connectivity restoration. We first introduce the communication and network models, followed by the mobility model. Subsequently, the damage model is described, and the decentralized recovery process is formulated as a constrained problem.

Throughout this section, we adopt a first-order system abstraction: UAVs move on a common flight layer, communication is predominantly LoS and later approximated by an effective disk model, peer-to-peer links are treated as symmetric, and the damage event occurs at $t_0$ before recovery begins. We further assume that a coarse mission-level reference (the virtual center used later in Section IV) and the last valid positions of failed nodes are available before severe fragmentation, e.g., through pre-failure topology maintenance, routing exchange, or shared flight-reference information \cite{mou2022resilient, chen2020sidr}. This information-availability mechanism is an operational assumption adopted to isolate the decentralized post-damage recovery problem, rather than an online estimation subsystem solved in the present paper. These assumptions isolate the topology restoration problem considered here; vertical maneuvers as an additional control degree of freedom, persistent stochastic link outages, online reference generation after the split, and sequential damage events are beyond the scope of the present system model.

\subsection{Communication Channel and Network Topology}
Consider a highly dynamic UAV swarm network (USNET) composed of $N$ homogeneous UAVs, represented by the node set $\mathcal{U}=\{u_1, u_2, \dots, u_N\}$. In many surveillance, search-and-rescue, and emergency-communication missions, UAVs coordinate on a common flight layer so that sensing coverage, search patterns, and communication footprints remain organized over the horizontal plane \cite{javed2024state, horyna2023decentralized}. Under this operating regime, horizontal separation is the dominant factor for pairwise connectivity, while moderate altitude deviations act mainly as a secondary perturbation to the link geometry; accordingly, planar coordination models are also common in CNS recovery methods such as HERO and SIDR \cite{mi2011hero, chen2020sidr}. We therefore formulate the spatial state of node $u_i \in \mathcal{U}$ at any discrete time step $t$ using a two-dimensional coordinate vector $\bm{p}_i(t) = [x_i(t), y_i(t)]^\top \in \mathbb{R}^2$, so the problem remains one of distributed connectivity restoration under incomplete post-damage network state information rather than pure motion coordination.

To characterize the air-to-air communication links while retaining a tractable topology model, we start from the Friis transmission equation with a fading term. Assuming a predominantly line-of-sight (LoS) channel, the received signal power at UAV $u_j$ from a transmitting UAV $u_i$ is given by
\begin{align}
    P_{ij}(t) = P_0 G_{tx} G_{rx} \left( \frac{\lambda_c}{4 \pi d_{ij}(t)} \right)^2 |h_0|^2,
    \label{eq:friis}
\end{align}
where $P_0$ is the transmission power, $G_{tx}$ and $G_{rx}$ are the antenna gains, $\lambda_c$ is the carrier wavelength, and $d_{ij}(t) = \|\bm{p}_i(t) - \bm{p}_j(t)\|$ denotes the Euclidean distance between the two UAVs. The term $|h_0|^2$ denotes the small-scale fading power gain. Rather than explicitly tracking instantaneous link outages, we adopt an equivalent communication radius $D_{comm}$ for a prescribed link budget and outage tolerance, and approximate the communication condition by $d_{ij}(t) \leq D_{comm}$. Consequently, bidirectional communication between $u_i$ and $u_j$ is feasible if and only if $d_{ij}(t) \leq D_{comm}$.

Since the UAVs are homogeneous and peer-to-peer links are assumed symmetric under the same transmission power and receiver sensitivity, the global physical topology of the USNET at time $t$ is modeled as an undirected graph $\mathcal{G}(t)=(\mathcal{U}, \mathcal{E}(t))$. The edge set $\mathcal{E}(t)$ is defined as $\mathcal{E}(t)=\{e_{ij}(t) \mid u_i, u_j \in \mathcal{U}, d_{ij}(t) \leq D_{comm}\}$. Accordingly, the physical neighborhood set of node $u_i$, which includes all reachable neighbors within its communication coverage, is formulated as
\begin{align}
    \mathcal{N}_i(t) = \{u_j \in \mathcal{U} \setminus \{u_i\} \mid e_{ij}(t) \in \mathcal{E}(t)\}.
    \label{eq:physical_neighbor}
\end{align}

\subsection{Mobility Model and Safety Constraints}

To characterize the controlled mobility of the UAV swarm, we adopt a discrete-time first-order kinematic model. Let $t \in \{0, 1, 2, \dots\}$ denote the discrete time index, and $\Delta t$ represent the constant duration of each control step. At each step $t$, the local controller of UAV $u_i \in \mathcal{U}$ outputs a velocity vector $\bm{v}_i(t) \in \mathbb{R}^2$, and the spatial position is updated as:
\begin{align}
    \bm{p}_i(t + \Delta t) = \bm{p}_i(t) + \bm{v}_i(t) \Delta t.
    \label{eq:kinematic}
\end{align}

Considering the physical maneuverability of the UAVs, the magnitude of the velocity vector is subject to a maximum speed constraint $v_{\max}$:
\begin{align}
    \|\bm{v}_i(t)\| \leq v_{\max}, \quad \forall u_i \in \mathcal{U}.
    \label{eq:v_max}
\end{align}

In addition to kinematic limits, the autonomous coordination must also avoid collisions. To prevent inter-UAV collisions during the connectivity restoration process, a minimum safety distance $D_{safe}$ is maintained between any pair of UAVs. This constraint is defined as:
\begin{align}
    \|\bm{p}_i(t) - \bm{p}_j(t)\| \geq D_{safe}, \quad \forall u_i, u_j \in \mathcal{U}, i \neq j.
    \label{eq:d_safe}
\end{align}

Under this mobility framework, the objective of the decentralized recovery algorithm is to determine an optimal velocity $\bm{v}_i(t)$ for each UAV, ensuring that both the speed constraint in \eqref{eq:v_max} and the safety constraint in \eqref{eq:d_safe} are satisfied throughout the mission duration.

\subsection{Damage Model and Network Connectivity}

We consider a scenario where the USNET undergoes massive node failures at initial time $t_0$ due to external hazards. Let $\mathcal{U}_{dmg} \subset \mathcal{U}$ denote the set of $N_{dmg}$ destroyed UAVs, and $\mathcal{U}_{act} = \mathcal{U} \setminus \mathcal{U}_{dmg}$ represent the set of $N_{act}$ remaining nodes. The post-damage topology is modeled as a remaining graph $\mathcal{G}_{act}(t) = (\mathcal{U}_{act}, \mathcal{E}_{act}(t))$, where the edge set $\mathcal{E}_{act}(t)$ consists of links between operational UAVs satisfying the distance constraint $d_{ij}(t) \leq D_{comm}$.

To mathematically evaluate the fragmentation of $\mathcal{G}_{act}(t)$, we define the adjacency matrix $\bm{A}_{act}(t) = [a_{ij}(t)] \in \{0, 1\}^{N_{act} \times N_{act}}$ for $i, j \in \{1, \dots, N_{act}\}$, where $a_{ij}(t) = 1$ if $e_{ij}(t) \in \mathcal{E}_{act}(t)$ and $a_{ij}(t) = 0$ otherwise. The degree matrix $\bm{D}_{act}(t)$ is a diagonal matrix with entries $D_{ii} = \sum_{j=1}^{N_{act}} a_{ij}$. The Laplacian matrix of the remaining graph is then formulated as:
\begin{equation}
    \bm{L}_{act}(t) = \bm{D}_{act}(t) - \bm{A}_{act}(t).
    \label{eq:laplacian}
\end{equation}

The connectivity properties of $\mathcal{G}_{act}(t)$ are characterized by the eigenvalues of $\bm{L}_{act}(t)$, denoted as $0 = \lambda_1(t) \leq \lambda_2(t) \leq \dots \leq \lambda_{N_{act}}(t)$. The graph is connected if and only if the second smallest eigenvalue (i.e., the Fiedler value) satisfies $\lambda_2(t) > 0$ \cite{mohar1991laplacian}. In the event of a Communication Network Split (CNS), the graph decomposes into $N_s(t)$ disjoint sub-networks, where $N_s(t)$ equals the multiplicity of the zero eigenvalue of $\bm{L}_{act}(t)$. Therefore, we focus on massive damage scenarios where $N_s(t_0) > 1$. The recovery objective is to control the velocity $\bm{v}_i(t)$ of each remaining UAV $u_i \in \mathcal{U}_{act}$ such that the disjoint sub-networks merge, eventually achieving $\lambda_2(t_0 + T_{res}) > 0$ within a finite duration $T_{res}$.

\subsection{Problem Formulation for Decentralized Recovery}

The primary objective of the resilient coordination is to restore the global network connectivity of the fragmented USNET within a minimum time duration $T_{res}$. Unlike centralized approaches that rely on a global state, we consider a decentralized execution paradigm where each remaining UAV $u_i \in \mathcal{U}_{act}$ determines its control action from localized observations together with the shared reference established before fragmentation.

At each time step $t$, a UAV $u_i$ perceives its local environment through a limited observation space $\mathcal{O}_i(t)$, which is derived from its physical neighborhood $\mathcal{N}_i(t)$, nearby failed-node cues, and the pre-shared virtual center. The decentralized control policy, parameterized by $\theta$ and denoted as $\pi_\theta$, maps the local observation to a continuous velocity vector:
\begin{equation}
    \bm{v}_i(t) = \pi_\theta(\mathcal{O}_i(t)), \quad \forall u_i \in \mathcal{U}_{act}.
\end{equation}

Mathematically, the decentralized connectivity restoration problem is formulated as:
\begin{align}
    (\text{P1}): \quad \min_{\pi_\theta} \quad & \mathbb{E}[T_{\text{res}}] \\
    \text{s.t.} \quad & \lambda_2(t_0 + T_{\text{res}}) > 0, \tag{C1} \label{con:connectivity} \\
    & \|\bm{v}_i(t)\| \leq v_{\max}, \quad \forall u_i, t, \tag{C2} \label{con:v_max} \\
    & \|\bm{p}_i(t) - \bm{p}_j(t)\| \geq D_{safe}, \quad \forall i \neq j, t, \tag{C3} \label{con:d_safe} \\
    & \mathcal{O}_i(t) \subseteq \{\bm{p}_j(t), \bm{v}_j(t) \mid u_j \in \mathcal{N}_i(t) \cup \{u_i\}\} \nonumber\\
    &\quad \cup \{\bm{p}_k(t) \mid u_k \in \mathcal{U}_{dmg},\, d_{ik}(t)\le D_{comm}\} \nonumber\\
    &\quad \cup \{\bm{p}_{center}\}. \tag{C4} \label{con:local_obs}
\end{align}

Constraints \eqref{con:connectivity}-\eqref{con:d_safe} enforce global connectivity restoration, kinematic limits, and collision avoidance, respectively. Constraint \eqref{con:local_obs} restricts the policy execution to a strictly decentralized manner. Problem P1 therefore defines the ideal constrained control objective. Since directly solving P1 over decentralized continuous policies is intractable, Section IV introduces a reward-based learning surrogate that is designed to approximate the minimization of $T_{res}$ while respecting the same physical and informational constraints during policy learning and execution.

\section{Physics-informed Graph Adversarial Imitation Learning Approach}
This section first introduces the overall architecture of the proposed PhyGAIL algorithm. Then the second subsection details the local scale-invariant graph perception mechanism. The physics-informed graph neural networks and imitation learning strategy are presented in the third and fourth subsections, respectively. The fifth subsection analyzes structural properties of the resulting architecture, and the sixth subsection derives the algorithm's complexity.

\begin{figure*}[!t]
\centerline{\includegraphics[width=.95\linewidth]{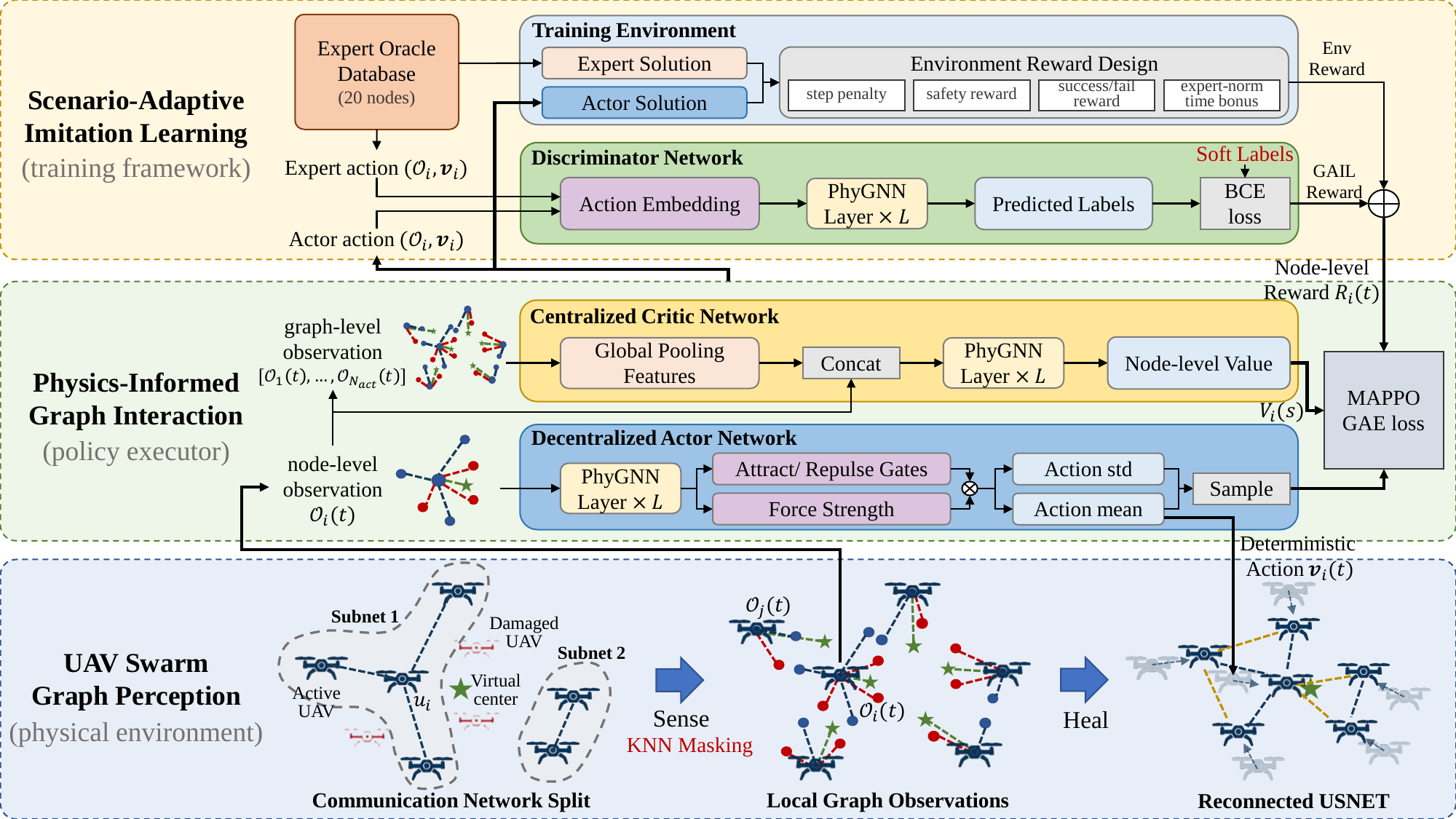}}
\captionsetup{justification=raggedright, singlelinecheck=false}
\caption{Overall training and execution architecture of PhyGAIL.}
\label{fig:framework}
\end{figure*}

\subsection{Overall Framework of PhyGAIL}

Figure~\ref{fig:framework} shows the overall architecture of PhyGAIL under centralized training with decentralized execution. The framework contains three coupled modules: UAV swarm graph perception, physics-informed graph interaction, and scenario-adaptive imitation learning. These modules correspond to three stages of graph-MARL adaptation in large damaged swarms: bounded graph construction defines a scale-stable local decision domain, physics-gated interaction modeling resolves attraction-repulsion ambiguity inside that domain, and expert-guided imitation stabilizes learning under sparse and imbalanced recovery feedback without rebuilding a global post-damage network view before recovery.

\textbf{UAV Swarm Graph Perception:}
The bottom layer represents the physical environment and constructs the bounded observation $\mathcal{O}_i(t)$ for each active UAV. Within the communication range $D_{comm}$, the local graph contains active UAVs, damaged UAVs, and a virtual center $\bm{p}_{center}$, where the virtual center provides a pre-shared merging reference and damaged-node coordinates provide residual fault cues \cite{mou2022resilient, chen2020sidr}. A K-nearest-neighbor masking mechanism then limits the number of active and damaged neighbors, so the local decision domain remains scale-stable. This shared-reference acquisition is treated as part of the recovery setting, not as an additional online module of PhyGAIL.

\textbf{Physics-Informed Graph Interaction:}
The middle layer is the decentralized policy executor. For each active UAV $u_i$, the actor maps $\mathcal{O}_i(t)$ to a continuous velocity command $\bm{v}_i(t)$ with an $L$-layer physics-informed graph neural network (PhyGNN). Here, ``physics-informed'' means that relative position and velocity cues are decomposed into attraction and repulsion gates together with a learnable force strength, rather than being left to generic graph aggregation. During training, a centralized critic uses masked global pooling and a separate PhyGNN encoder to estimate the node-level value $V_i(s)$.

\textbf{Scenario-Adaptive Imitation Learning:}
The top layer provides the offline training mechanism. An expert trajectory database and a discriminator network generate dense GAIL rewards from local observations and actions, and these rewards are combined with the environment reward to form the final node-level reward $R_i(t)$. The environment reward includes the step penalty, safety-related terms, terminal feedback, and an expert-normalized time bonus. The actor and critic are then optimized with MAPPO and GAE.

Together, these modules form a unified decentralized recovery framework. During offline training, the actor, centralized critic, expert trajectory database, and discriminator are all active; during test-time execution, each UAV keeps only the local graph perception module and the trained actor with its PhyGNN encoder, so deployment does not require retraining or centralized supervision. In this sense, bounded graph construction makes large-scale deployment feasible, the gated interaction model makes the learned behavior physically meaningful, and expert-guided imitation makes the policy trainable under severe topology fragmentation.

\subsection{Bounded Heterogeneous Graph Perception}

In a fully decentralized swarm, each UAV can access only local information from nearby agents. After a CNS event, surviving neighbors alone are often insufficient to reveal the damaged region or the direction needed for reconnection. We therefore represent the local observation of each active UAV as a bounded heterogeneous perception graph. Figure~\ref{fig:graph_perception} shows the resulting local graph, which augments local neighbor information with both fault-related cues and a shared spatial reference.

\subsubsection{Heterogeneous Node Representation}

The local graph is defined as $\mathcal{G}_{local}=(\mathcal{U}_{local},\mathcal{E}_{local})$, where the node set $\mathcal{U}_{local}$ contains three categories indicated by a type variable $c_i\in\{0,1,2\}$:

\begin{itemize}
    \item \textbf{Active Nodes (Type 0):} operational UAVs in the local sub-network. Each node encodes the UAV kinematic state, including position and velocity, for coordinated motion and collision avoidance.
    \item \textbf{Damaged Nodes (Type 1):} incapacitated UAVs within communication range ($d_{ij}\leq D_{comm}$). Their velocities are masked to zero, while their coordinates serve as spatial anchors that outline the damaged region.
    \item \textbf{Virtual Center (Type 2):} a predefined reference node $u_c$ at position $\bm{p}_c$, used to provide isolated sub-networks with a shared merging direction. This reference is assumed to be known before severe fragmentation through mission planning or pre-failure coordination \cite{chen2020sidr}.
\end{itemize}

The edge set also contains heterogeneous interaction types: active--active communication links, damaged--active observation links, and center--active guidance links. This abstraction does not assume that a disconnected subnet can reconstruct the entire post-damage global graph online. Instead, it assumes only a limited prior reference and residual fault cues, whereas complete whole-swarm topology access remains the domain of centralized methods such as CR-MGC \cite{mou2022resilient}.

\begin{figure}[!t]
\centerline{\includegraphics[width=1.0\linewidth]{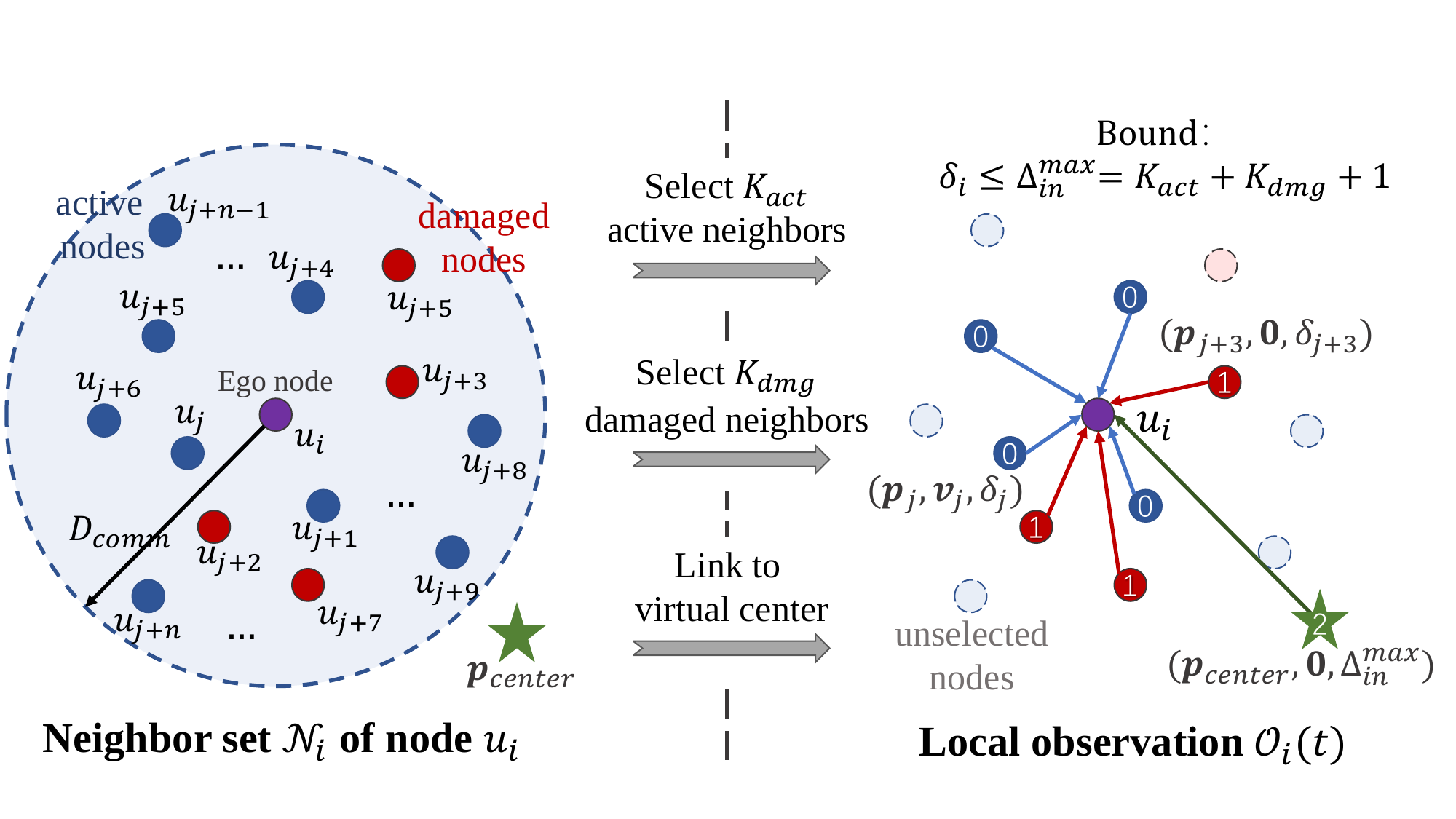}}
\captionsetup{justification=raggedright, singlelinecheck=false}
\caption{Construction of the bounded heterogeneous local observation graph.}
\label{fig:graph_perception}
\end{figure}

To keep the local receptive field bounded regardless of the global swarm size $N$, we use a directed K-NN \cite{dong2011efficient} masking mechanism. Although the communication radius $D_{comm}$ limits interaction distance, the number of neighbors within this radius can still grow with swarm density. Figure~\ref{fig:graph_perception} illustrates how K-NN masking is used to build a bounded computational topology.

For each active node $u_i$, we first identify the set of operational neighbors within communication range:
\begin{equation}
\mathcal{N}_i^{comm}=\{u_j\in\mathcal{U}_{act}\setminus\{u_i\}\mid d_{ij}\le D_{comm}\}.
\end{equation}
Node $u_i$ then selects at most $K_{act}$ closest active neighbors from $\mathcal{N}_i^{comm}$ to form the directed subset $\mathcal{N}_i^{K_{act}}$. Since each node gathers states only from the neighbors it selects, a directed edge $e_{j\to i}$ is established if $u_j\in\mathcal{N}_i^{K_{act}}$:
\begin{align}
e_{j\to i}\in\mathcal{E}_{local}\iff u_j\in\mathcal{N}_i^{K_{act}}.
\end{align}

This masking keeps the maximum in-degree of any node bounded and decouples the local observation dimension from the global swarm scale. To avoid unbounded feature growth under catastrophic failures, each active UAV also attends to at most $K_{dmg}$ nearest damaged neighbors within $D_{comm}$, forming the damaged-neighbor set $\mathcal{N}_i^{K_{dmg}}$.

Including the virtual center node $u_c$, the masked local neighbor set of $u_i$ is
\begin{equation}
\mathcal{N}_i^{local}=\mathcal{N}_i^{K_{act}}\cup\mathcal{N}_i^{K_{dmg}}\cup\{u_c\}.
\end{equation}
The maximum in-degree is therefore bounded by
\begin{align}
\Delta_{in}^{\max}=K_{act}+K_{dmg}+1.
\label{eq:max_degree}
\end{align}

\subsubsection{Local Observation Encoding}

Based on the bounded heterogeneous graph, we construct the continuous observation used by the policy network. To improve robustness to spatial translation and varying swarm density, the observation uses relative and normalized kinematic states rather than absolute coordinates.

Each node position is represented relative to the virtual center $\bm{p}_{center}$ and normalized by the geographical scale factor $W_{scale}$:
\begin{align}
\tilde{\bm{p}}_i(t)=\frac{\bm{p}_i(t)-\bm{p}_{center}}{W_{scale}}.
\end{align}
Similarly, the velocity is normalized by the mechanical speed limit $v_{\max}$:
\begin{align}
\tilde{\bm{v}}_i(t)=\frac{\bm{v}_i(t)}{v_{\max}}.
\end{align}
For damaged nodes and the virtual center, i.e., $c_i\in\{1,2\}$, the velocity is fixed to zero.

We also include the structural degree $\delta_i(t)$ to reflect local congestion. To keep this feature in a bounded range, we apply logarithmic normalization:
\begin{equation}
\tilde{\delta}_i(t)=\frac{\ln(1+\delta_i(t))}{\ln(1+\Delta_{in}^{\max})},
\end{equation}
where $\Delta_{in}^{\max}$ is defined in (\ref{eq:max_degree}). The raw spatial-kinematic feature vector is then
\begin{equation}
\bm{x}_i(t)=[\tilde{\bm{p}}_i(t)^\top,\tilde{\bm{v}}_i(t)^\top,\tilde{\delta}_i(t)]^\top.
\end{equation}

Finally, the categorical node type $c_i$ is mapped into a learnable embedding space through $\phi_{node}(\cdot)$. The final node observation is
\begin{equation}
\mathcal{O}_i(t)=[\bm{x}_i(t)^\top,\phi_{node}(c_i)^\top]^\top.
\end{equation}
As a result, the policy input dimension scales as $\mathcal{O}(\Delta_{in}^{\max})$ rather than with the global swarm size $N$, which supports zero-shot transfer to larger swarms.

\subsection{Physics-Informed Graph Interactions}

Having defined the bounded local observation graph, we now describe the core neural architecture of PhyGAIL. This part contains two components: a physics-gated message passing mechanism for local interaction modeling, and MAPPO-based actor--critic networks that map local observations to continuous velocity actions and state-value estimates.

\begin{figure}[!t]
\centerline{\includegraphics[width=1.0\linewidth]{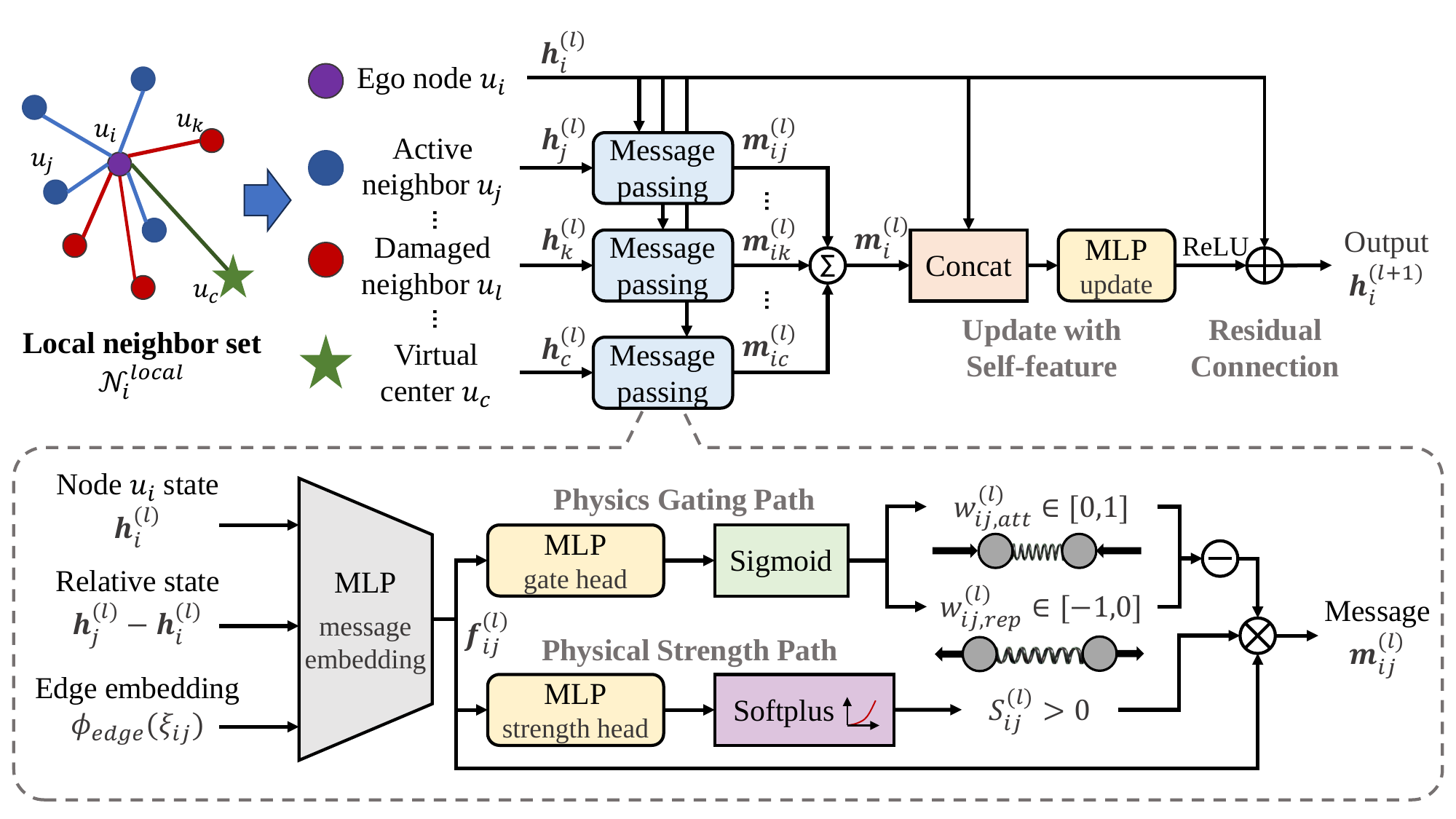}}
\captionsetup{justification=raggedright, singlelinecheck=false}
\caption{Physics-gated message passing in the proposed PhyGNN layer.}
\label{fig:phygnn}
\end{figure}

\subsubsection{Physics-Gated Message Passing}

Figure~\ref{fig:phygnn} summarizes the proposed physics-gated message passing mechanism. Conventional graph neural networks do not explicitly model the attraction, repulsion, and collision-related interaction semantics required in continuous-space multi-UAV coordination. We therefore introduce a message passing rule that decomposes each directed edge interaction into attraction and repulsion gates together with a learnable force strength.

Let $\bm{h}_i^{(l)}$ denote the hidden state of node $u_i$ at layer $l$. We initialize the hidden state with the localized graph observation, i.e.,
\begin{equation}
    \bm{h}_i^{(0)}=\mathcal{O}_i(t).
\end{equation}
For a directed edge $e_{j\to i}$ with discrete type $\xi_{ij}\in\{0,1,2\}$, we construct the interaction feature $\bm{f}_{ij}^{(l)}$ from the receiver state, the relative state, and the semantic edge embedding $\phi_{edge}(\xi_{ij})$:
\begin{equation}
    \bm{f}_{ij}^{(l)}=\text{MLP}_{msg}^{(l)}\left([\bm{h}_i^{(l)\top},(\bm{h}_j^{(l)}-\bm{h}_i^{(l)})^\top,\phi_{edge}(\xi_{ij})^\top]^\top\right).
\end{equation}
For edges involving active or damaged nodes, this feature encodes relative kinematics explicitly. For virtual-center edges ($\xi_{ij}=2$), the relative velocity is zero, so the message acts only as a spatial anchor.

We use a dual-gating mechanism to separate different interaction modes. The feature $\bm{f}_{ij}^{(l)}$ passes through a gating network with a Sigmoid activation, producing an attraction gate $w_{ij,att}^{(l)}$ and a repulsion gate $w_{ij,rep}^{(l)}\in[0,1]$. In parallel, a Softplus-activated network outputs the interaction force magnitude $S_{ij}^{(l)}>0$:
\begin{align}
    [w_{ij,att}^{(l)},w_{ij,rep}^{(l)}]^\top &= \text{Sigmoid}\left(\text{MLP}_{gate}^{(l)}(\bm{f}_{ij}^{(l)})\right), \\
    S_{ij}^{(l)} &= \text{Softplus}\left(\text{MLP}_{strength}^{(l)}(\bm{f}_{ij}^{(l)})\right).
\end{align}

The directed physical message is
\begin{equation}
    \bm{m}_{j\to i}^{(l)}=\bm{f}_{ij}^{(l)}\cdot S_{ij}^{(l)}\cdot\left(w_{ij,att}^{(l)}-w_{ij,rep}^{(l)}\right).
\end{equation}

\subsubsection{PhyGNN Layer Forward Propagation}

Figure~\ref{fig:phygnn} also shows the forward pass of a single PhyGNN layer. We aggregate incoming physical messages from the local computational neighborhood $\mathcal{N}_i^{local}$ according to the force superposition principle:
\begin{equation}
    \bm{m}_i^{(l)}=\sum_{u_j\in\mathcal{N}_i^{local}}\bm{m}_{j\to i}^{(l)}.
\end{equation}

We then update the node representation with a nonlinear transformation and a residual connection:
\begin{equation}
    \bm{h}_i^{(l+1)}=\text{ReLU}\left(\text{MLP}_{update}^{(l)}([\bm{h}_i^{(l)\top},\bm{m}_i^{(l)\top}]^\top)\right)+\bm{h}_i^{(l)}.
\end{equation}
The residual term helps stabilize gradient propagation during adversarial training.

Stacking $L$ such PhyGNN layers enables the network to capture multi-hop spatial interactions within the swarm. The final embedding $\bm{h}_i^{(L)}$ serves as the feature representation for the task-specific actor and critic heads.

\subsubsection{MAPPO-Based Actor-Critic Networks}

We use separate PhyGNN encoders in the actor and critic networks of the MAPPO framework \cite{yu2022surprising}. This avoids parameter sharing between policy optimization and value estimation.

\textbf{Actor Network:}  
The decentralized actor $\pi_\theta(a_i\mid\mathcal{O}_i)$ maps local observations to continuous velocity commands. For each active node $u_i$, two parallel MLPs output the mean $\bm{\mu}_i$ and log-standard deviation of a Gaussian policy. During training, we sample actions from this distribution for exploration. During execution, we squash the mean with a hyperbolic tangent and scale it by the mechanical limit:
\begin{equation}
    \bm{v}_i(t)=v_{\max}\cdot\tanh(\bm{\mu}_i).
\end{equation}

\textbf{Centralized Critic Network:}  
Under the CTDE paradigm, the centralized critic $V_\phi$ receives the PhyGNN embeddings $\bm{h}_i^{(L)}$ of all nodes in the global graph. We then apply masked dual pooling over the active-node subset $\mathcal{U}_{act}$ to aggregate global features:
\begin{equation}
    \bm{h}_{global}=[\text{mean}(\{\bm{h}_i^{(L)}\}_{i\in\mathcal{U}_{act}})\parallel \max(\{\bm{h}_i^{(L)}\}_{i\in\mathcal{U}_{act}})].
\end{equation}
We concatenate this pooled feature with the ego embedding to form an agent-specific representation,
\begin{equation}
    \bm{h}_i^{combined}=[\bm{h}_i^\top,\bm{h}_{global}^\top]^\top,
\end{equation}
and use a value head to predict the agent-specific state value $V_i(\bm{s})$. During training, we optimize the critic by minimizing the squared temporal-difference error with respect to the GAE targets. This masked architecture reduces the interference of static nodes and improves value estimation stability under different damage levels.

\subsection{Scenario-Adaptive Imitation Learning Strategy}

Sparse environmental rewards alone are insufficient for training the PhyGNN policy across fragmented CNS scenarios with different damage levels. We therefore introduce a scenario-adaptive adversarial imitation learning strategy under the CTDE paradigm. This part contains three components: expert demonstration generation, step-level adversarial imitation rewards, and an expert-normalized temporal baseline.

\subsubsection{Expert Demonstration Generation}

High-quality expert demonstrations guide the policy toward collision-safe and time-efficient connectivity restoration. Since no single algorithm consistently performs best over all fragmented topologies, we build the expert dataset offline by running multiple baseline algorithms, including both centralized optimization methods and decentralized heuristics, over simulated damage scenarios. For each topology, we select the trajectory with the minimum restoration time as the expert reference. An expert trajectory is defined as
\begin{equation}
    \tau_E=\{(\mathcal{O}_i^{(E)}(t),\bm{v}_i^{(E)}(t))\}_{t=0}^{T_E},
\end{equation}
where $T_E$ denotes the task completion time. Because both expert trajectories and policy rollouts are generated in the same simulator, their observation spaces are consistent, which avoids distribution mismatch during discriminator training.
This composite expert pool is used to broaden offline supervision coverage across fragmented scenarios, rather than to transfer centralized test-time information into the decentralized actor.

This expert pool is used only during offline training. Even when a selected trajectory comes from a centralized baseline with whole-topology access, that global information is never queried by the learned actor during decentralized execution. The centralized trajectories therefore act only as privileged supervision for expert selection, not as test-time side information for PhyGAIL.

We also use spatial data augmentation to improve sample efficiency and reduce overfitting to absolute geographic coordinates. Specifically, we augment the positions and velocities in each expert trajectory with the dihedral symmetry group $D_4$ \cite{cohen2016group}. By applying $90^\circ$ rotations and axis reflections, each original trajectory yields eight geometrically isomorphic variants. The resulting expert buffer $\mathcal{B}_E$ encourages the discriminator to focus on relative topology rather than absolute orientation, which improves zero-shot generalization.

\subsubsection{Step-Level Adversarial Imitation Reward}

In highly fragmented topologies, successful subnetwork merging occurs rarely, so the environmental reward is extremely sparse. We therefore use Generative Adversarial Imitation Learning (GAIL) \cite{ho2016generative} to provide dense step-level guidance through a node-level discriminator $D_\omega$.

Instead of operating on the whole swarm, the discriminator evaluates localized state-action pairs. For an active node $u_i$, we use an independent $L$-layer PhyGNN encoder to extract the state embedding $\bm{h}_i^{D}$ from the localized observation $\mathcal{O}_i$. We then normalize the executed velocity command by the mechanical limit and map it to an action embedding with a multi-layer perceptron:
\begin{equation}
    \bm{z}_i^{A}=\text{MLP}_{act}\left(\frac{\bm{v}_i}{v_{\max}}\right).
\end{equation}
We concatenate the state and action embeddings and feed them into a discriminator head with LeakyReLU activations:
\begin{equation}
    D_\omega(\mathcal{O}_i,\bm{v}_i)=\text{LeakyReLU}\left(\text{MLP}_{head}([\bm{h}_i^{D\top},\bm{z}_i^{A\top}]^\top)\right).
\end{equation}
Here, $D_\omega(\cdot)\in(0,1)$ denotes the probability that the state-action pair comes from the expert policy.

To reduce discriminator overconfidence, we apply one-sided label smoothing \cite{szegedy2016rethinking}. The discriminator minimizes the BCE loss with soft labels, where expert trajectories use target probability $\lambda_E=0.9$ and actor-generated trajectories use $\lambda_A=0.1$:
\begin{align}
    \mathcal{L}_D(\omega) &= - \mathbb{E}_{(\mathcal{O}_i,\bm{v}_i)\sim\mathcal{B}_E}\left[\lambda_E \log D_\omega(\mathcal{O}_i,\bm{v}_i)\right] \nonumber \\
    &\quad - \mathbb{E}_{(\mathcal{O}_i,\bm{v}_i)\sim\pi_\theta}\left[(1-\lambda_A)\log(1-D_\omega(\mathcal{O}_i,\bm{v}_i))\right].
\end{align}
Here, $\mathcal{B}_E$ is the mini-batch sampled from the expert database.

We convert the stabilized discriminator output into a step-level imitation reward. For numerical stability, we clip the discriminator output with a small constant $\epsilon=10^{-8}$:
\begin{equation}
    \tilde{D}_\omega=\text{clip}(D_\omega(\mathcal{O}_i(t),\bm{v}_i(t)),\epsilon,1-\epsilon).
\end{equation}
The dense imitation reward is then
\begin{equation}
    r_i^{GAIL}(t)=-\log(1-\tilde{D}_\omega).
\end{equation}
Maximizing this reward encourages the policy to match the collision-safe merging behavior contained in the expert demonstrations.

\subsubsection{Expert-Normalized Temporal Baseline}

Constant step penalties are often used to encourage fast task completion. In CNS restoration, however, recovery horizons vary significantly with the damage level. Mild fragmentation may be resolved in tens of steps, whereas severe fragmentation may require several hundred steps. A fixed step penalty therefore biases the value function toward short-horizon tasks and can destabilize learning.

We reduce this effect with an expert-normalized temporal baseline. For a given scenario, let $T_E$ denote the completion steps of the expert algorithm. We define the dynamic speed bonus factor as
\begin{equation}
    \eta(t)=\min\left(\exp\left(\lambda\left(1-\frac{t}{\alpha T_E}\right)\right),\eta_{\max}\right).
\end{equation}
Here, $\alpha\geq1$ is a tolerance margin relative to the expert time, $\lambda$ controls the exponential decay, and $\eta_{\max}$ caps the maximum bonus.

When connectivity is successfully restored, i.e., all subnetworks merge into one connected component with $N_s=1$, the active UAVs receive the terminal reward
\begin{equation}
    r^{success}(t)=r^{base}+r^{time}\cdot\eta(t).
\end{equation}
Here, $r^{base}$ is the base success reward and $r^{time}$ is the speed bonus pool. If the learned policy matches the expert pace, it receives the nominal bonus; if it reconnects faster, the bonus increases; if it is slower, the bonus decays smoothly. This normalization keeps reward scaling more consistent across scenarios with different damage severities.

\subsubsection{Reward Design and Training}

For each active node $u_i$, the total step-level reward combines the adversarial imitation term and the environmental reward:
\begin{equation}
    R_i(t)=w_{IL}\cdot r_i^{GAIL}(t)+r_i^{env}(t).
\end{equation}
Here, $w_{IL}$ is the imitation coefficient. We decompose the environmental reward into three parts:
\begin{equation}
    r_i^{env}(t)=r^{step}+r_i^{safe}(t)+r^{term}(t).
\end{equation}

The urgency penalty $r^{step}<0$ is a small constant that discourages loitering. To enforce physical safety and reduce local congestion, we define the safety penalty as
\begin{equation}
    r_i^{safe}(t)=-\omega_{col}\sum_{u_j\in\mathcal{N}_i^{comm}}\max(0,D_{safe}-d_{ij}),
\end{equation}
where $d_{ij}$ is the physical distance between nodes and $\omega_{col}$ is the scaling weight. This term discourages collisions directly.

The terminal reward incorporates the expert-normalized success reward when restoration succeeds. If the maximum episode length $T_{\max}$ is reached without convergence, we assign a failure penalty proportional to the number of remaining disconnected components:
\begin{equation}
    r^{term}(t)=
    \begin{cases}
    r^{success}(t), & \text{if } N_s = 1, \\
    -P^{fail}\cdot N_s, & \text{if } t = T_{\max} \text{ and } N_s > 1, \\
    0, & \text{otherwise}.
    \end{cases}
\end{equation}

Finally, we clip the synthesized reward $R_i(t)$ to the bounded range $[-R_{\max},R_{\max}]$ to stabilize critic training and reduce variance in value estimation. The overall training procedure is summarized in Algorithm~\ref{alg:phygail}.

\begin{algorithm}[tbp]
\caption{CTDE Training Procedure of PhyGAIL}
\label{alg:phygail}
\begin{algorithmic}[1]
\renewcommand{\algorithmicrequire}{\textbf{Input:}}
\renewcommand{\algorithmicensure}{\textbf{Output:}}
\REQUIRE Augmented expert buffer $\mathcal{B}_E$, maximum epochs $E_{\max}$, rollout horizon $T$, batch size $B$.
\ENSURE Trained actor $\pi_\theta$, critic $V_\phi$, and discriminator $D_\omega$.
\FOR{$epoch = 1,2,\dots,E_{\max}$}
    \STATE Reset the parallel environments and initialize the rollout buffer $\mathcal{B}_{rollout}$.
    \FOR{$t = 1,2,\dots,T$}
        \STATE Construct localized observations $\mathcal{O}_i(t)$ and apply active-node masking.
        \STATE Sample actions $\bm{v}_i \sim \pi_\theta(\cdot|\mathcal{O}_i(t))$ for all active UAVs.
        \STATE Execute actions, observe next states, and collect environmental rewards $r_i^{env}(t)$.
        \STATE Store the resulting transitions in $\mathcal{B}_{rollout}$.
    \ENDFOR
    \STATE Sample expert state-action pairs from $\mathcal{B}_E$ and policy rollouts from $\mathcal{B}_{rollout}$.
    \STATE Update the discriminator $D_\omega$ with BCE loss and label smoothing.
    \STATE Compute the imitation reward $r_i^{GAIL}(t)$ for each active node.
    \STATE Compute the final reward $R_i(t)=w_{IL}r_i^{GAIL}(t)+r_i^{env}(t)$ and clip to $[-R_{\max},R_{\max}]$.
    \STATE Compute GAE advantages and value targets.
    \STATE Update the actor parameters $\theta$ and critic parameters $\phi$ with MAPPO.
\ENDFOR
\end{algorithmic}
\end{algorithm}

\subsection{Structural Property Analysis}

This section examines several structural properties of PhyGAIL. Because decentralized multi-agent interactions are nonlinear and discrete, the following results are not performance guarantees for recovery optimality, convergence speed, or zero-shot success. We instead focus on three properties that matter directly for large-scale deployment: the bounded spectral norm of the local perception graph, the bounded pseudo-force generated by physics-gated message passing, and the controlled variance of the expert-normalized terminal reward.

\subsubsection{Bounded Spectral Norm of Scale-Invariant Perception}

In large-scale MARL, graph feature magnitudes may grow uncontrollably when a policy trained on small swarms is deployed on much larger systems. Standard unmasked communication graphs can indeed have unbounded degree. In PhyGAIL, by contrast, the perception graph is bounded by K-NN truncation and physical spatial constraints.

We assume that all UAVs maintain a minimum safety distance $D_{safe} > 0$ during flight and that communication links exist only within radius $D_{comm}$.

\begin{proposition}
\label{prop:bounded_spectral_norm}
\textbf{Bounded Spectral Norm of Directed Perception Graphs.}
Let $\tilde{\bm{A}}$ denote the asymmetric adjacency matrix of the directed K-NN perception graph $\mathcal{G}_{local}$, where directed edges represent message-passing flow. Under algorithmic truncation and physical constraints, the spectral norm (induced $2$-norm) of $\tilde{\bm{A}}$ is bounded by a constant independent of the global swarm size $N$:
\begin{equation}
    \|\tilde{\bm{A}}\|_2 \le \sqrt{\Delta_{in}^{\max} \cdot \Delta_{out}^{\max}},
\end{equation}
where the maximum in-degree is bounded algorithmically by $\Delta_{in}^{\max} = K_{act} + K_{dmg} + 1$, and the maximum out-degree satisfies $\Delta_{out}^{\max} \le C_{pack}$. Here, $C_{pack} = \mathcal{O}((D_{comm}/D_{safe})^d)$ is a conservative sphere-packing bound \cite{conway2013sphere} on the maximum number of coexisting nodes within the communication region in a $d$-dimensional space.
\end{proposition}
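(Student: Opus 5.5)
The plan is to use the classical bound $\|\tilde{\bm{A}}\|_2 \le \sqrt{\|\tilde{\bm{A}}\|_1\,\|\tilde{\bm{A}}\|_\infty}$ for a nonnegative $0/1$ matrix, and then bound the maximum row sum and the maximum column sum separately by the in-degree and the out-degree. The inequality itself follows either from Hölder/Riesz--Thorin interpolation, or more elementarily from $\|\tilde{\bm{A}}\|_2^2=\rho(\tilde{\bm{A}}^\top\tilde{\bm{A}})\le\|\tilde{\bm{A}}^\top\tilde{\bm{A}}\|_\infty\le\|\tilde{\bm{A}}^\top\|_\infty\|\tilde{\bm{A}}\|_\infty=\|\tilde{\bm{A}}\|_1\|\tilde{\bm{A}}\|_\infty$.

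Alternatively, I can verify the bound directly with Schur's test, which avoids citing interpolation. For any unit vector $\bm{x}$,
\begin{equation}
\|\tilde{\bm{A}}\bm{x}\|_2^2=\sum_i\Big(\sum_{j\to i}x_j\Big)^2\le\sum_i \deg_{in}(i)\sum_{j\to i}x_j^2\le\Delta_{in}^{\max}\sum_j \deg_{out}(j)\,x_j^2\le\Delta_{in}^{\max}\Delta_{out}^{\max}.
\end{equation}
The first inequality is Cauchy--Schwarz, and the second swaps the order of summation. This reduces the proposition to two combinatorial facts. Fix the convention that row $i$ of $\tilde{\bm{A}}$ collects the in-edges $e_{j\to i}$; the transposed convention only swaps the roles and the product is symmetric.

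\textbf{Step 1: bound the in-degree (row sums).} By construction in Section IV-B, each active node $u_i$ receives messages only from $\mathcal{N}_i^{local}=\mathcal{N}_i^{K_{act}}\cup\mathcal{N}_i^{K_{dmg}}\cup\{u_c\}$. Its in-degree is therefore at most $K_{act}+K_{dmg}+1=\Delta_{in}^{\max}$, as in \eqref{eq:max_degree}. Rows for damaged nodes and for the virtual center have zero in-degree, since they are not receivers.

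\textbf{Step 2: bound the out-degree (column sums), which is the main obstacle.} K-NN selection gives no algorithmic cap on how many nodes choose a given $u_j$. Instead I use geometry: every out-neighbor $u_i$ of $u_j$ satisfies $d_{ij}\le D_{comm}$, so all of them lie in the ball $B(\bm{p}_j,D_{comm})$. Under the safety assumption, distinct UAVs are pairwise at least $D_{safe}$ apart. The disjoint balls of radius $D_{safe}/2$ around these nodes therefore fit inside $B(\bm{p}_j,D_{comm}+D_{safe}/2)$, and a volume comparison gives at most $(2D_{comm}/D_{safe}+1)^d=\mathcal{O}((D_{comm}/D_{safe})^d)$ nodes.

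Two points need care in this step. First, damaged nodes are frozen at their last positions, so I assume the separation constraint also held for them at the damage instant (pre-damage positions satisfied \eqref{eq:d_safe}); alternatively, damaged nodes only send edges and are counted separately by the same packing argument. Second, the virtual center $u_c$ is exceptional, because every active node points to it, so its column could have up to $N_{act}$ ones. I would handle this by treating the center either as a per-agent replicated node (each $\mathcal{G}_{local}$ has its own copy) or as a rank-one term bounded separately. Since the proposition asserts $N$-independence, I would adopt the per-agent-copy interpretation, consistent with the local graph being defined per UAV, and state it explicitly. With $\Delta_{out}^{\max}\le C_{pack}$ established, combining it with Step 1 through the Schur bound yields the claim, with a constant independent of $N$.
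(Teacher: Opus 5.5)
Your proposal is correct and follows essentially the paper's route: the paper bounds $\|\tilde{\bm{A}}\|_\infty$ by the algorithmic in-degree $K_{act}+K_{dmg}+1$, bounds $\|\tilde{\bm{A}}\|_1$ by the packing constant $C_{pack}$, and combines the two through $\|\tilde{\bm{A}}\|_2\le\sqrt{\|\tilde{\bm{A}}\|_1\|\tilde{\bm{A}}\|_\infty}$, which your Schur-test computation proves inline. Your treatment of the virtual center is more careful than the paper's: its packing step silently covers every column, yet center edges are not distance-limited, and a single shared $u_c$ would have out-degree $N_{act}$, forcing $\|\tilde{\bm{A}}\|_2\ge\sqrt{N_{act}}$. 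So your per-agent-copy interpretation is genuinely needed for $N$-independence, whereas your rank-one alternative would not yield it.
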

\begin{IEEEproof}
See Appendix~\ref{app:proof_spectral_norm}.
\end{IEEEproof}

This result formalizes the role of bounded graph construction: even when the global swarm size grows, the local aggregation operator seen by each UAV remains numerically controlled. Thus, the amplification factor of graph propagation stays bounded as the swarm grows, which provides a structural explanation for the zero-shot scale generalization observed in the full framework.

\subsubsection{Bounded Dynamics of Physics-Gated Message Passing}

We next analyze the boundedness of the pseudo-force generated by the physics-gated message passing mechanism. This property is directly related to stable mobility control.

\begin{proposition}
\label{prop:bounded_force}
\textbf{Bounded Pseudo-Force Generation.}
Suppose the local observation $\mathcal{O}_i(t)$ lies in a compact feature space $\mathcal{X}$. Then the aggregated pseudo-force $\bm{m}_i$ produced by the PhyGNN encoder is bounded.
\end{proposition}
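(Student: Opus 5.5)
The plan is to show, by induction on the layer index $l$, that every hidden state $\bm{h}_i^{(l)}$ stays in a compact set, and then bound the aggregated message by the triangle inequality together with the bounded in-degree in \eqref{eq:max_degree}. The argument uses only three facts. First, every MLP in the PhyGNN layer (for example $\text{MLP}_{msg}^{(l)}$, $\text{MLP}_{gate}^{(l)}$, $\text{MLP}_{strength}^{(l)}$, $\text{MLP}_{update}^{(l)}$) is a finite composition of affine maps and continuous activations, so it is continuous. Second, a continuous map sends compact sets to compact, hence bounded, sets. Third, the sums in the aggregation range over at most $\Delta_{in}^{\max}=K_{act}+K_{dmg}+1$ terms, which is independent of $N$.

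\textbf{Base case.} Since $\bm{h}_i^{(0)}=\mathcal{O}_i(t)\in\mathcal{X}$ and $\mathcal{X}$ is compact, there is a finite $H^{(0)}$ with $\|\bm{h}_i^{(0)}\|\le H^{(0)}$ for all $i$. The edge embedding $\phi_{edge}(\xi_{ij})$ takes only three values, so it is trivially bounded.

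\textbf{Inductive step.} Assume $\|\bm{h}_j^{(l)}\|\le H^{(l)}$ for all nodes $j$.
\begin{itemize}
\item The input $[\bm{h}_i^{(l)\top},(\bm{h}_j^{(l)}-\bm{h}_i^{(l)})^\top,\phi_{edge}(\xi_{ij})^\top]^\top$ lies in a compact set $\mathcal{K}^{(l)}$. By continuity, $\|\bm{f}_{ij}^{(l)}\|\le F^{(l)}:=\max_{\mathcal{K}^{(l)}}\|\text{MLP}_{msg}^{(l)}\|$.
\item The Sigmoid gates lie in $[0,1]$, so $|w_{ij,att}^{(l)}-w_{ij,rep}^{(l)}|\le 1$ with no further argument.
\item The Softplus strength $S_{ij}^{(l)}$ is continuous in $\bm{f}_{ij}^{(l)}$, which ranges over a compact set. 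Hence $0<S_{ij}^{(l)}\le S_{\max}^{(l)}$. This step needs the compactness argument because Softplus itself is unbounded.
\item Consequently $\|\bm{m}_{j\to i}^{(l)}\|\le F^{(l)}S_{\max}^{(l)}$.
\item Summing over $\mathcal{N}_i^{local}$, which has at most $\Delta_{in}^{\max}$ elements by the K-NN masking, gives
\begin{equation}
\|\bm{m}_i^{(l)}\|\le \Delta_{in}^{\max}\,F^{(l)}S_{\max}^{(l)}.
\end{equation}
\item Finally, the update $\bm{h}_i^{(l+1)}=\text{ReLU}(\text{MLP}_{update}^{(l)}([\bm{h}_i^{(l)\top},\bm{m}_i^{(l)\top}]^\top))+\bm{h}_i^{(l)}$ applies a continuous map on a compact set and adds a bounded residual. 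This yields a finite $H^{(l+1)}$ and closes the induction.
\end{itemize}
Running the induction over $l=0,\dots,L-1$ bounds $\bm{m}_i^{(l)}$ at every layer. In particular, the aggregated pseudo-force $\bm{m}_i$ is bounded by a constant that depends on $\mathcal{X}$, the network weights, $L$, and $\Delta_{in}^{\max}$, but not on $N$.

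The main obstacle is making this last point airtight: the bound must not depend on the swarm size. The compactness argument alone only gives boundedness for each fixed graph. Uniformity in $N$ comes from two facts together. The per-edge bound depends only on the compact set $\mathcal{K}^{(l)}$, which contains at most pairs of points from a fixed bounded set and is therefore independent of $N$. The number of summands is capped by \eqref{eq:max_degree}.

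To make the constants explicit rather than existential, one can use the fact that each MLP is Lipschitz with constant at most the product of its layer spectral norms. That gives the explicit bound $\|\mathrm{MLP}(x)\|\le \|\mathrm{MLP}(0)\|+\mathrm{Lip}\cdot\|x\|$. I would state this explicit bound as a remark and keep the compactness argument as the proof.
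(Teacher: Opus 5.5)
Your proposal is correct and uses the same argument as the paper. Continuity of the MLPs on a compact set bounds $\bm{f}_{ij}$ and $S_{ij}$, the Sigmoid gate difference has magnitude at most $1$, and the in-degree cap $\Delta_{in}^{\max}$ bounds the sum, which gives $\|\bm{m}_i\|\le \Delta_{in}^{\max}S_{\max}C_f$; your layer-by-layer induction adds only what the paper's proof leaves implicit, namely that the hidden states $\bm{h}_i^{(l)}$ for $l\ge 1$ (not just the raw observations in $\mathcal{X}$) stay in a compact set, so the bound holds at every layer and uniformly in $N$.
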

\begin{IEEEproof}
See Appendix~\ref{app:proof_bounded_force}.
\end{IEEEproof}

This bound complements Proposition~\ref{prop:bounded_spectral_norm}: after bounding how much information can be aggregated, we also bound the magnitude of the resulting interaction response. As a consequence, the policy network cannot generate arbitrarily large internal interaction signals, which helps keep the coordination dynamics numerically stable.

\subsubsection{Variance Decoupling via Expert-Normalized Baseline}

Long-horizon reinforcement learning tasks often have high return variance because episode length changes from scenario to scenario. Here we examine the terminal success signal induced by the expert-normalized reward.

\begin{proposition}
\label{prop:variance_decoupling}
\textbf{Variance Control of the Terminal Success Signal.}
Let the terminal success reward be
\begin{equation}
    r_{term}^{+} = r_{base} + r_{speed}\eta(t),
\end{equation}
where the normalized speed bonus factor is clipped such that $\eta(t)\in[0,\eta_{\max}]$. Then
\begin{equation}
    \mathrm{Var}(r_{term}^{+}) \le \frac{1}{4}(r_{speed}\eta_{\max})^2.
\end{equation}
\end{proposition}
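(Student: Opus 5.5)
The plan is to treat $r_{term}^{+}$ as a random variable whose only source of randomness is the success time $t$. That time varies with the scenario, the damage realization and the policy's stochastic rollouts, all through $\eta(t)$. The base reward $r_{base}$ is a deterministic constant, so the first step is to use shift invariance of the variance:
\begin{equation}
\mathrm{Var}(r_{term}^{+}) = \mathrm{Var}\bigl(r_{speed}\,\eta(t)\bigr) = r_{speed}^2\,\mathrm{Var}\bigl(\eta(t)\bigr).
\end{equation}
This reduces the claim to bounding the variance of a random variable $X=\eta(t)$ supported on the interval $[0,\eta_{\max}]$. The interval is guaranteed by the $\min(\cdot,\eta_{\max})$ clipping in the definition of $\eta(t)$ together with positivity of the exponential.

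The second step is Popoviciu's inequality: any random variable $X$ with $a\le X\le b$ satisfies $\mathrm{Var}(X)\le (b-a)^2/4$. I will include its short derivation for self-containedness.
\begin{itemize}
\item Let $c=(a+b)/2$. The variance minimizes the mean squared deviation over all constants, so $\mathrm{Var}(X)\le \mathbb{E}[(X-c)^2]$.
\item Since $|X-c|\le (b-a)/2$ pointwise, this gives $\mathbb{E}[(X-c)^2]\le (b-a)^2/4$.
\end{itemize}
Applying it with $a=0$ and $b=\eta_{\max}$ gives $\mathrm{Var}(\eta(t))\le \eta_{\max}^2/4$. Multiplying by $r_{speed}^2$ yields the stated bound $\tfrac14 (r_{speed}\eta_{\max})^2$.

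The only point needing care is what the expectation is taken over, since the proposition does not fix it. I will state that the distribution is the joint law over damage scenarios (hence $T_E$) and successful recovery times under the current policy. The bound holds uniformly for every such law because it uses only the almost-sure range of $\eta$. I will also remark that the bound is independent of $T_E$, of the horizon $T_{\max}$ and of the damage level. This is the intended ``decoupling'': unlike a cumulative step penalty, whose spread grows with episode length, this terminal signal has variance controlled purely by the design constants $r_{speed}$ and $\eta_{\max}$.

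No step is a genuine obstacle. The only subtlety is confirming that $\eta(t)\ge 0$, which holds because the exponential is positive, so the support lies in $[0,\eta_{\max}]$.
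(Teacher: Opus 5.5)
Your proof is correct and follows essentially the same route as the paper. The paper places $r_{term}^{+}$ in the interval $[r_{base}, r_{base}+r_{speed}\eta_{\max}]$ of length $r_{speed}\eta_{\max}$ and applies Popoviciu's inequality directly; your variant, which first removes $r_{base}$ by shift invariance and pulls out $r_{speed}^2$, is equivalent, and it has the minor advantage of not tacitly requiring $r_{speed}\ge 0$.
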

\begin{IEEEproof}
See Appendix~\ref{app:proof_variance_decoupling}.
\end{IEEEproof}

This result clarifies why expert normalization is useful beyond heuristic reward shaping: it explicitly limits the spread of the positive terminal signal across scenarios with different completion times. The bound applies only to the terminal success signal. Step penalties may still introduce variance through stochastic episode length, but the positive success reward itself remains bounded. Hence, value estimation is less sensitive to large differences in task horizon.

\subsection{Complexity Analysis}

We analyze the communication, computational, and memory costs of PhyGAIL during decentralized execution. For practical UAV swarm deployment, each agent should keep its onboard resource consumption bounded and independent of the global swarm size $N$.

During execution, each active UAV broadcasts only a fixed-size local state, including relative position, velocity, and current degree, to neighbors within the communication range $D_{comm}$. Proposition~\ref{prop:bounded_spectral_norm} bounds the number of active UAVs inside this region by the geometric packing constant $C_{pack}$, so the per-step communication overhead of each UAV remains $\mathcal{O}(C_{pack})=\mathcal{O}(1)$ with respect to $N$.

The onboard computation has two bounded parts: local graph construction and PhyGNN inference. In graph construction, each UAV selects the $K_{act}$ nearest active neighbors and the $K_{dmg}$ nearest damaged neighbors from a bounded candidate set, yielding worst-case complexity $\mathcal{O}(C_{pack}\log K_{act})$. In inference, the ego node receives at most $\Delta_{in}^{\max}\leq K_{act}+K_{dmg}+1$ incoming messages, so an $L$-layer PhyGNN with hidden dimension $d_h$ requires
\begin{equation}
    \mathcal{O}\left(C_{pack}\log K_{act}+L\cdot \Delta_{in}^{\max}\cdot d_h^2\right).
\end{equation}
All terms are constants or bounded hyperparameters, so the per-agent computational complexity is also $\mathcal{O}(1)$ with respect to $N$. 

The memory footprint is similarly bounded by the fixed-size PhyGNN parameters, $\mathcal{O}(L\cdot d_h^2)$, together with the bounded local observation buffer whose size scales with $\Delta_{in}^{\max}$. For practice, the trained PhyGAIL actor requires only 2.78 MB of onboard storage for 723K parameters. Taken together, these three results show that PhyGAIL maintains constant communication, computation, and memory overhead per UAV during decentralized execution. This bound characterizes the asymptotic local decision cost under bounded observations, rather than the end-to-end wall-clock latency of the centralized simulator benchmark reported later in Section~V-D.

\section{Experimental Results}

This section first introduces the simulation setups and then discusses the performance of the proposed algorithm \footnote{The source codes of the proposed PhyGAIL algorithm are available on \textit{https://github.com/lytxzt/Physics-informed-GAIL}}. The experiment results of centralized algorithms like CR-MGC \cite{mou2022resilient}, DEMD \cite{lin2024multihop}, and GDR-TS \cite{wang2025topology}, decentralized approaches including center-fly, HERO \cite{mi2011hero}, SIDR \cite{chen2020sidr}, and CR-MA \cite{li2025fast}, are also displayed for comparisons.

\subsection{Experimental Setup}

\paragraph*{Simulation Environment}
All experiments are conducted in a 2-D UAV swarm simulator with communication radius $D_{\mathrm{comm}}=120$ m, maximum UAV speed $v_{\max}=10$ m/s, and control interval $\Delta t=0.1$ s. We consider five swarm sizes, $N\in\{20,50,100,200,500\}$, with square mission areas of side length $320$, $500$, $750$, $1000$, and $1600$ m, respectively. The maximum episode length is set to $T_{\max}=0.8W$, where $W$ denotes the map width. An episode terminates when the surviving UAVs reconnect into a single connected component or when the step budget is exhausted. During training, we impose a repulsion reward with a safety margin of $15$ m, while in evaluation a collision is counted when the pairwise distance falls below $10$ m.

\paragraph*{Baselines and Metrics}
For the main comparison, we vary the damage ratio $\rho=N_D/N$ from $0.05$ to $0.95$ with an interval of $0.05$, except for the $N=20$ case where the maximum ratio is $0.90$. For each swarm size, damage ratio, and method, we average the results over $50$ disconnected post-damage cases. The main evaluation metrics are the convergence rate, average recovery time, and average collisions per UAV; case-level variability is shown by confidence intervals in the curves and mean$\pm$std values in the aggregate tables. The runtime evaluation additionally reports the first-action latency, total solving time, and per-step inference time. All baselines strictly follow their original formulations and reported parameter settings. Centralized baselines such as CR-MGC, DEMD, and GDR-TS are evaluated with post-damage whole-topology access according to their original formulations. When these methods provide expert trajectories, the resulting supervision is restricted to offline expert selection and does not alter the local observation budget of PhyGAIL at test time.

\paragraph*{Training Details}
We train PhyGAIL on $20$-UAV scenarios and directly evaluate it on larger swarms without fine-tuning. Training uses $32$ parallel environments for $1000$ epochs under PPO with standard GAE settings, clipping coefficient $0.2$, and $5$ policy-update epochs per outer iteration. The entropy coefficient decays from $0.05$ to $0.005$. The actor, critic, and discriminator are optimized by AdamW with learning rates of $10^{-4}$, and the discriminator is updated once per outer epoch with label smoothing. The final training reward is computed as the environment reward plus $0.1$ times the GAIL reward, followed by normalization and clipping.

\begin{figure*}[!t]
\centering
\subfloat[Convergence rate under different swarm scales and damage ratios.\label{fig:convergence_large_panel}]{
    \includegraphics[width=0.98\textwidth]{fig/convergence_large_panel.png}
}\vspace{-0.5em}\\
\subfloat[Average recovery time under different swarm scales and damage ratios.\label{fig:recovery_large_panel}]{
    \includegraphics[width=0.98\textwidth]{fig/recovery_time_large_panel.png}
}\vspace{-0.5em}\\
\subfloat[Average collisions per UAV under different swarm scales and damage ratios.\label{fig:collision_large_panel}]{
    \includegraphics[width=0.98\textwidth]{fig/collision_large_panel.png}
}
\captionsetup{justification=raggedright,singlelinecheck=false}
\caption{Performance comparison under different swarm scales and damage ratios. Curves denote mean performance over $50$ disconnected post-damage cases for each setting. In panels (b) and (c), shaded bands indicate the corresponding $95\%$ confidence intervals.}
\label{fig:overall_comparison}
\end{figure*}

\paragraph*{Network Architecture}
The PhyGAIL encoder contains three physics-informed gated message-passing layers with hidden dimension $128$, and each node feature includes the relative position, velocity, and normalized topological degree. In graph construction, each active UAV retains up to $K_{act}=8$ active-neighbor communication edges and up to $K_{dmg}=3$ damaged-neighbor observation edges. The actor adopts a tanh-squashed Gaussian policy, the critic combines local graph features with masked global mean/max pooling, and the discriminator performs node-level expert classification conditioned on both state embeddings and actions.

\subsection{Overall Performance Comparison}

We first compare PhyGAIL with representative baselines under different swarm scales and damage ratios. Figure~\ref{fig:overall_comparison} summarizes the three primary metrics: convergence rate, average recovery time, and average collisions per UAV.

PhyGAIL maintains a perfect convergence rate across all tested swarm scales and damage ratios, as shown in Fig.~\ref{fig:convergence_large_panel}. This advantage becomes most evident in the large-scale settings ($N=100,200,500$), where HERO, SIDR, and CR-MA degrade rapidly as the damage ratio increases. Several centralized or centroid-seeking baselines, including center-fly, CR-MGC, DEMD, and GDR-TS, still preserve high convergence under low to moderate damage. Their behavior in the other two panels, however, shows that high convergence alone does not guarantee efficient or safe recovery. Among all compared methods, only PhyGAIL maintains perfect reconnection reliability over the full tested range.

Recovery time increases with the damage ratio for all methods, reflecting the increasing difficulty of reconnecting more severely fragmented swarms. In the large-scale settings, PhyGAIL stays in the best-performing group over most damage ratios and consistently outperforms the decentralized baselines in Fig.~\ref{fig:recovery_large_panel}. The shaded confidence bands stay relatively compact through low-to-moderate damage ratios and widen mainly in the most fragmented regimes, indicating that the dominant variability comes from scenario difficulty rather than unstable recovery behavior. Under extremely high damage ratios, some centralized baselines, especially DEMD, become slightly faster when reconnection is still feasible. This suggests that global topology information can help identify short merging paths in highly fragmented cases. Even so, this advantage appears only in a narrow high-damage regime, whereas PhyGAIL remains more robust over the full large-scale range and achieves the best overall average recovery time.

The collision curves in Fig.~\ref{fig:collision_large_panel} show a similarly clear trend. PhyGAIL keeps collisions near zero at small scales and maintains a much lower collision level than center-fly, CR-MGC, GDR-TS, and CR-MA in the large-scale cases. SIDR yields slightly lower collision values in part of the large-scale regime, but this comes with a substantial drop in convergence rate and recovery efficiency, which is consistent with a more conservative subnet-level motion strategy that avoids close encounters but often fails to drive timely reconnection under severe fragmentation. The collision-confidence bands remain narrow for PhyGAIL over most settings, which is consistent with stable collision-aware behavior. The main strength of PhyGAIL therefore lies in its overall balance: it combines reliable reconnection, competitive restoration speed, and low collision risk within a single decentralized policy.

\begin{table}[!t]
\centering
\small
\caption{Performance summary over large-scale settings ($N=100,200,500$).}
\label{tab:overall_summary}
\renewcommand{\arraystretch}{1.15}
\resizebox{\linewidth}{!}{
\begin{tabular}{ccccc}
\toprule
Algorithm & Conv. Rate$\uparrow$ & Rec. Time$\downarrow$ & Collision$\downarrow$ & Overall Rank$\downarrow$ \\
\midrule
PhyGAIL & \textbf{1.000} & \textbf{20.14$\pm$22.42} & 0.161$\pm$0.115 & \textbf{2.061} \\
DEMD & 0.990 & 26.55$\pm$22.84 & 0.964$\pm$0.419 & 3.956 \\
CR-MGC & 0.949 & 33.84$\pm$29.92 & 0.772$\pm$0.313 & 4.462 \\
GDR-TS & 0.975 & 36.48$\pm$32.84 & 0.931$\pm$0.537 & 4.471 \\
SIDR & 0.472 & 49.36$\pm$38.88 & \textbf{0.131$\pm$0.116} & 4.731 \\
center-fly & 0.998 & 28.94$\pm$25.12 & 1.173$\pm$0.622 & 4.798 \\
CR-MA & 0.606 & 49.39$\pm$40.81 & 0.333$\pm$0.176 & 4.904 \\
HERO & 0.276 & 67.56$\pm$43.51 & 0.809$\pm$0.584 & 6.617 \\
\bottomrule
\end{tabular}}
\end{table}

Table~\ref{tab:overall_summary} aggregates the results over the large-scale settings $N=\{100,200,500\}$. The overall rank is computed case by case and then averaged over all tested cases. PhyGAIL attains perfect average convergence ($1.000$), the shortest average recovery time ($20.14$ s), and the best overall rank ($2.061$). SIDR achieves a slightly smaller collision value, but this comes with a substantial drop in convergence rate and recovery efficiency.
This result should be read as an overall comparison rather than a claim of uniform dominance in every case: in a narrow subset of extremely fragmented but still recoverable cases, methods with full global topology access can occasionally find shorter merging paths, whereas PhyGAIL is stronger in overall balance and decentralized deployability across the tested range.


\begin{figure}[!t]
\centering
\includegraphics[width=\linewidth]{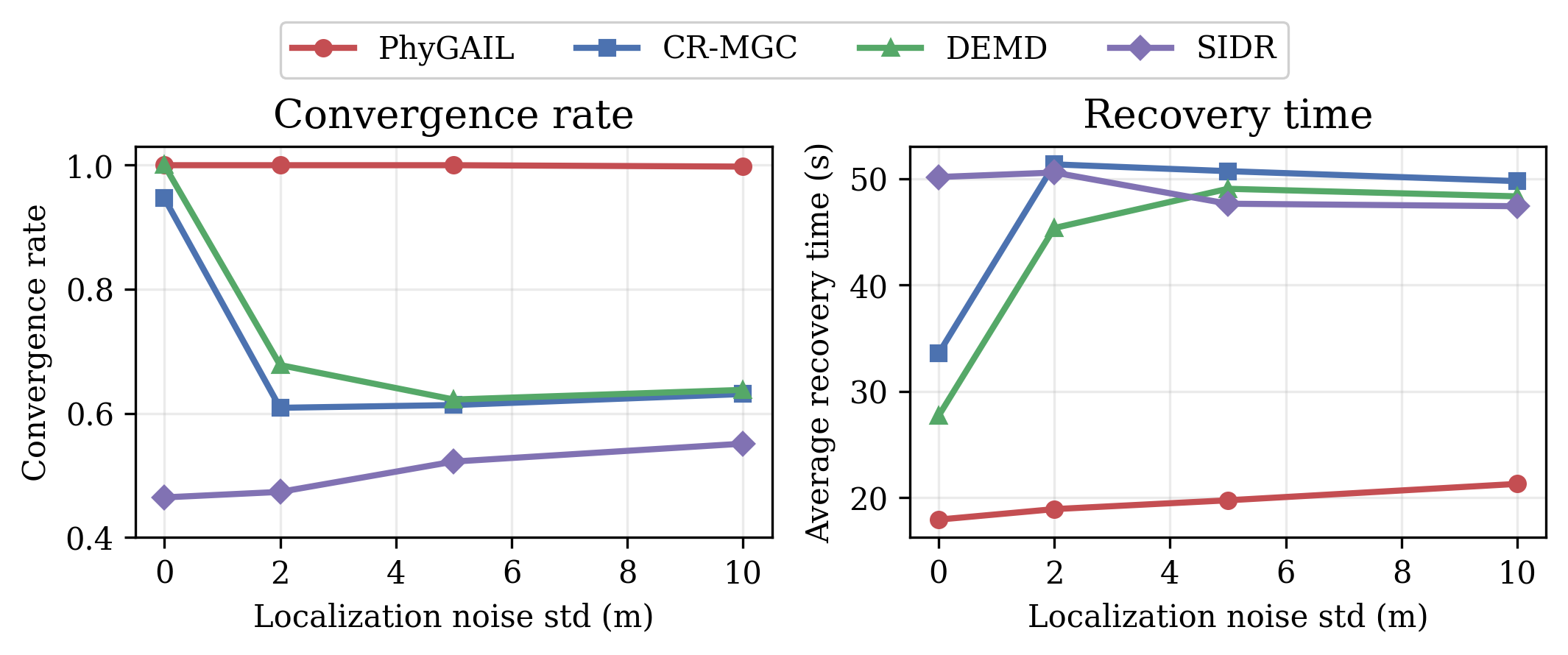}
\captionsetup{justification=raggedright,singlelinecheck=false}
\caption{Robustness under localization noise.}
\label{fig:realism_locnoise}
\end{figure}

\subsection{Robustness under Imperfect Observations}

We further examine whether PhyGAIL remains effective under imperfect local observations. Figure~\ref{fig:realism_locnoise} shows the large-scale localization-noise results aggregated over $N=\{100,200,500\}$ and $\rho=\{0.25,0.5,0.75\}$. The perturbation is applied to all observed position cues, including surviving UAVs, damaged UAVs, and the virtual-center reference of each local subnet, so the test covers noise on both neighborhood geometry and shared directional guidance. As the noise standard deviation increases from $0$ to $10$ m, PhyGAIL degrades gracefully: its aggregated convergence rate remains essentially unchanged ($1.000$ to $0.998$), while the average recovery time increases only moderately from $17.94$ s to $21.30$ s. By contrast, CR-MGC and DEMD suffer much larger drops in convergence rate and much larger increases in recovery time under the same perturbation, which suggests that the bounded local graph with relative kinematic observations is substantially less sensitive to spatial state noise than recovery policies that rely more heavily on accurate global topology snapshots. As a supplementary check, we also tested delayed local neighbor information for PhyGAIL. Across $0$ to $5$ delay steps, the convergence rate remained $1.000$ and the average recovery time changed only from $17.94$ s to $18.37$ s, so its effect was negligible and we omit a separate figure for brevity.

\subsection{Ablation Study}
We next identify which components contribute most to large-scale robustness and training stability.

\subsubsection{Scalability and Robustness}

\begin{table*}[!t]
\centering
\small
\caption{Ablation summary on large-scale settings.}
\label{tab:ablation_results}
\renewcommand{\arraystretch}{1.15}
\resizebox{0.98\textwidth}{!}{
\begin{tabular}{cccccccc}
\toprule
\multirow{2}{*}{Ablated Variant} & \multicolumn{2}{c}{$N=100$} & \multicolumn{2}{c}{$N=200$} & \multicolumn{2}{c}{$N=500$} & \multirow{2}{*}{Overall Rank$\downarrow$} \\
\cmidrule(lr){2-3}\cmidrule(lr){4-5}\cmidrule(lr){6-7}
& $\Delta$Conv. Rate$\uparrow$ & $\Delta$Rec. Time$\downarrow$ & $\Delta$Conv. Rate$\uparrow$ & $\Delta$Rec. Time$\downarrow$ & $\Delta$Conv. Rate$\uparrow$ & $\Delta$Rec. Time$\downarrow$ & \\
\midrule
w/o attraction gate & -0.267& +16.88 & -0.260& +22.27 & -0.700& +63.91 & 6.077 \\
w/o repulsion gate & -0.667& +37.48 & -0.833& +53.10 & -0.807& +75.01 & 7.185 \\
\midrule
w/ GAT instead of PhyGNN & -0.000& +5.70 & -0.007& +22.22 & -0.293& +51.43 & 6.457 \\
w/ GraphSAGE instead of PhyGNN & -0.067& +9.35 & -0.107& +11.85 & -0.260& +44.56 & 6.250 \\
w/ GCN instead of PhyGNN & -0.353& +19.36 & -0.447& +33.03 & -0.673& +64.90 & 8.216 \\
\midrule
w/o KNN selection & -0.007& +1.52 & -0.007& +7.01 & -0.113& +27.24 & 4.940 \\
w/o damaged neighbor & -0.040& +7.79 & -0.120& +19.96 & -0.433& +50.88 & 6.351 \\
w/o virtual center & -0.127& +13.80 & -0.347& +29.66 & -0.640& +65.69 & 7.294 \\
\midrule
w/o GAIL reward & -0.000& +1.27 & -0.000& +5.68 & -0.007& +20.18 & 4.591 \\
w/o centralized critic & -0.000& +2.43 & -0.000& +6.34 & -0.160& +38.82 & 5.198 \\
w/o expert time reward & -0.187& +15.40 & -0.513& +35.47 & -0.607& +57.84 & 7.429 \\
w/o any time reward & -0.553& +26.95 & -0.593& +35.87 & -0.667& +58.70 & 8.011 \\
\bottomrule
\end{tabular}}
\end{table*}

Table~\ref{tab:ablation_results} reports the performance drop of each variant relative to the full PhyGAIL model on the large-scale settings after averaging over the tested damage ratios $\rho=\{0.25,0.5,0.75\}$; the overall rank is again obtained case by case and then averaged over all tested cases.

The two gate ablations first confirm the need for complementary attraction and repulsion modeling. Removing either gate causes clear degradation across all three scales, while removing the repulsion gate is particularly harmful: both the convergence rate and the recovery time deteriorate sharply, indicating that attraction alone cannot maintain safe and effective large-scale coordination. Removing the attraction gate is less destructive but still causes large efficiency loss, especially at $N=500$, showing that repulsion alone is insufficient for reliable subnet merging.

The graph-encoder replacements then show the strongest effect on scalability. Replacing PhyGNN with conventional graph encoders consistently reduces convergence and recovery efficiency, and the gap widens with swarm size. The largest degradation occurs with the GCN variant, while the GAT and GraphSAGE variants remain more competitive at smaller scales but also degrade sharply at $N=500$. This pattern shows that standard graph aggregation is insufficient for stable large-scale recovery, whereas the physics-gated interaction in PhyGNN becomes more important as the topology grows more fragmented and heterogeneous.

Among the perception-related components, the virtual center has the largest impact on large-scale coordination, with its removal causing strong drops in both convergence and recovery efficiency across all three scales. Removing damaged-neighbor observations also degrades performance substantially, especially at $N=500$, while removing the KNN selection mechanism has only a mild effect until the largest scale. The bounded local graph is therefore useful for scalability, but the shared directional cue from the virtual center plays the larger role in driving subnet merging.

The training-related ablations finally highlight the importance of temporal reward design. Removing the expert time reward causes large degradation across all three scales, and removing all time-related rewards yields the worst overall rank. By comparison, removing the centralized critic or the GAIL reward changes the final large-scale averages much less. This pattern suggests that temporal reward design mainly determines large-scale restoration efficiency, whereas the centralized critic and GAIL reward contribute more to optimization quality than to the final asymptotic metrics.

\subsubsection{Training Stability}

\begin{figure*}[!t]
\centering
\subfloat[Training curves with GNN replacements.\label{fig:ablation_phygnn_train}]{
    \includegraphics[width=0.48\linewidth]{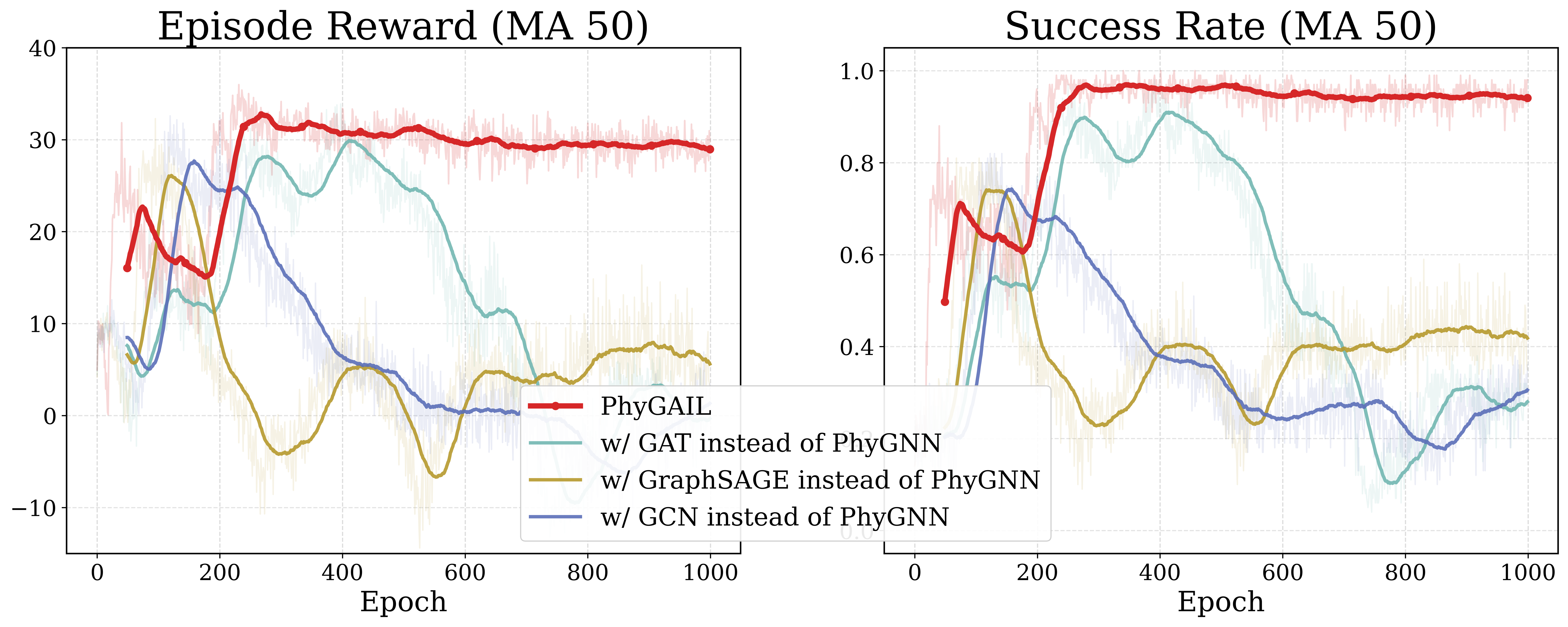}
}
\subfloat[Training curves with representative component ablations. \label{fig:ablation_component_train}]{
    \includegraphics[width=0.48\linewidth]{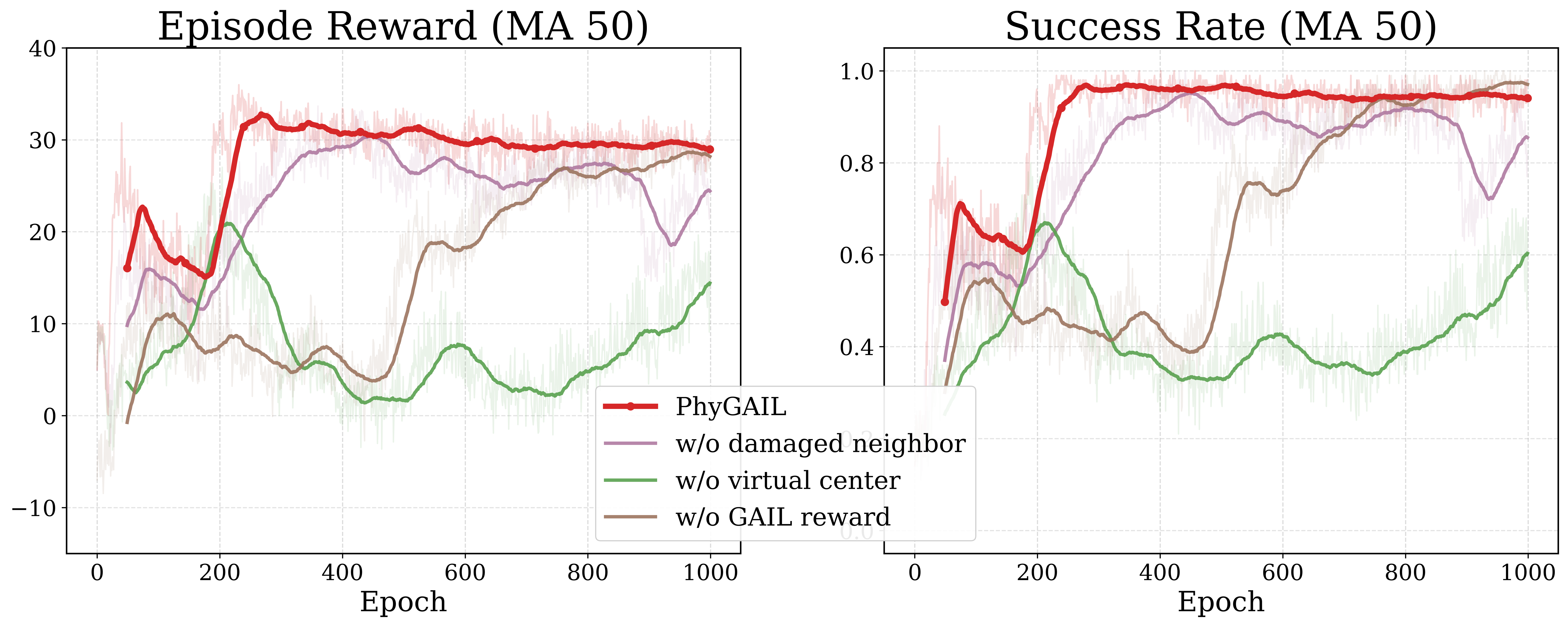}
}
\captionsetup{justification=raggedright,singlelinecheck=false}
\caption{Training stability of PhyGAIL and its ablated variants.}
\label{fig:ablation_train}
\end{figure*}

Figure~\ref{fig:ablation_train} shows the same separation in optimization dynamics. In Fig.~\ref{fig:ablation_phygnn_train}, the full PhyGAIL model quickly reaches a high-reward regime and maintains a high convergence level, whereas the GCN variant starts to collapse after its early improvement, the GAT variant reaches strong intermediate performance and then degrades in the later stage, and the GraphSAGE variant oscillates more strongly and recovers only partially after its drop. Figure~\ref{fig:ablation_component_train} further shows that removing damaged-neighbor observations weakens later-stage convergence, removing the virtual center causes much larger instability throughout training, and removing the GAIL reward mainly slows the early and middle stages without preventing eventual convergence. Together with Table~\ref{tab:ablation_results}, these curves indicate that PhyGNN and the virtual center govern large-scale robustness, the temporal reward design governs restoration efficiency, and the GAIL reward mainly accelerates and stabilizes training.

\subsubsection{Parameter Sensitivity}

\begin{figure}[!t]
\centering
\includegraphics[width=\linewidth]{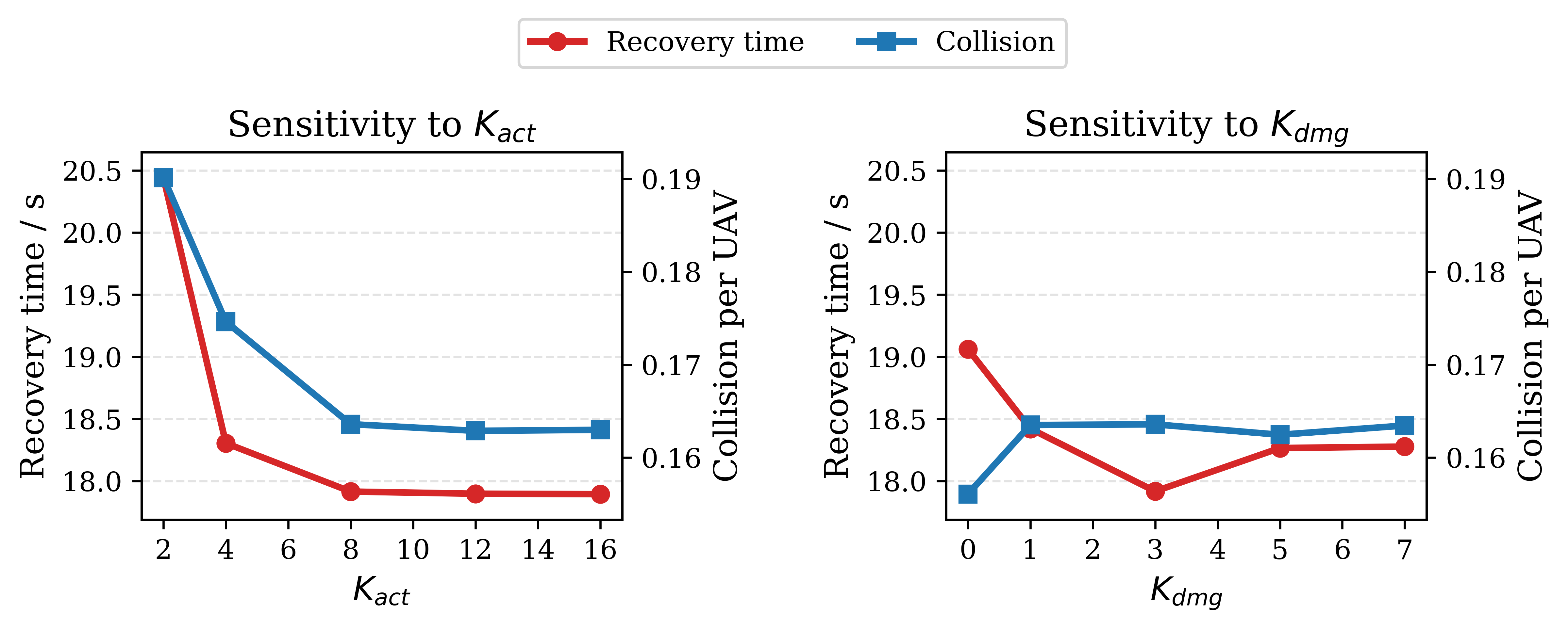}
\captionsetup{justification=raggedright,singlelinecheck=false}
\caption{Sensitivity to $K_{act}$ and $K_{dmg}$.}
\label{fig:k_sensitivity}
\end{figure}

Figure~\ref{fig:k_sensitivity} further examines the two bounded-perception parameters that determine the active-neighbor budget and damaged-node cue budget. As $K_{act}$ increases, the recovery time drops sharply from very small neighborhoods and then becomes almost unchanged beyond $K_{act}=8$, while the collision level also stabilizes in the same range. A similar pattern appears for $K_{dmg}$: very small damaged-node budgets are less efficient, but the performance difference becomes marginal once $K_{dmg}$ reaches $3$. Since larger $K_{act}$ and $K_{dmg}$ directly increase the local graph size and associated processing cost, we use the smallest values in this near-saturated region, namely $K_{act}=8$ and $K_{dmg}=3$.

\subsection{Case Study}

\begin{figure*}[!t]
\centering
\subfloat[Training dynamics of PhyGAIL.\label{fig:train1}]{
    \includegraphics[width=0.98\textwidth]{fig/case_study_train.png}
}\vspace{-0.5em}\\
\subfloat[Validation trajectories at representative epochs.\label{fig:train2}]{
    \includegraphics[width=0.98\textwidth]{fig/case_study_trajectories.png}
}
\captionsetup{justification=raggedright,singlelinecheck=false}
\caption{Case study of the learning and behavioral evolution of PhyGAIL.}
\label{fig:train}
\end{figure*}

We further inspect how the learned policy evolves during training and how this evolution appears in representative trajectories. Figure~\ref{fig:train1} presents the training dynamics of the full model, and Fig.~\ref{fig:train2} shows the corresponding validation trajectories at representative epochs.

The learning curves in Fig.~\ref{fig:train1} show a clear two-stage pattern. In the early stage, both the episode reward and the success rate rise quickly, which indicates that the policy first acquires basic collision avoidance and local coordination. A sharper transition then appears in the middle stage, after which both training and validation success rates approach $1.0$ and the episode reward climbs to a high plateau. The actor, critic, and discriminator losses follow the same stabilization trend, which suggests that the policy progresses from local motion regularities to an effective subnet-merging strategy.

The trajectories in Fig.~\ref{fig:train2} illustrate the same evolution on one validation case with $10$ damaged nodes and three disconnected surviving subnets. At epoch $20$, the subnet motions are no longer random, but they still lack a consistent merging direction and cannot reconnect within the horizon. At epoch $35$, the same case is solved in $130$ steps, showing that the policy has begun to use the shared directional cues encoded in the graph. This stage already captures the correct reconnection objective, although the subnet motions are still relatively indirect and only partially coordinated. By epoch $190$, the three subnets move toward the central region in a more synchronized way and solve the case in $68$ steps, while epoch $505$ further reduces the recovery time to $55$ steps mainly by removing redundant motion. Overall, the figure shows a progression from local motion learning, to expert cloning, then to stable reconnection behavior, and finally to more efficient large-scale coordination.

\begin{table*}[!t]
\centering
\small
\caption{Runtime comparison under large-scale settings ($N=100,200,500$).}
\label{tab:runtime_results}
\renewcommand{\arraystretch}{1.15}
\resizebox{0.98\textwidth}{!}{
\begin{tabular}{cccccccccc}
\toprule
\multirow{2}{*}{Algorithm} & \multicolumn{3}{c}{$N=100$} & \multicolumn{3}{c}{$N=200$} & \multicolumn{3}{c}{$N=500$} \\
\cmidrule(lr){2-4}\cmidrule(lr){5-7}\cmidrule(lr){8-10}
& Resp. (ms) & Solve (s) & Rec. Steps & Resp. (ms) & Solve (s) & Rec. Steps & Resp. (ms) & Solve (s) & Rec. Steps \\
\midrule
PhyGAIL & 4.02$\pm$0.23 & 0.14$\pm$0.08 & \textbf{52.7$\pm$32.3} & 7.42$\pm$0.58 & \textbf{0.82$\pm$0.38} & \textbf{143.5$\pm$66.2} & 17.61$\pm$1.24 & \textbf{6.60$\pm$3.02} & \textbf{269.5$\pm$122.7} \\
center-fly & \textbf{0.46$\pm$0.04} & \textbf{0.08$\pm$0.05} & 114.2$\pm$74.0 & \textbf{1.73$\pm$0.15} & 0.93$\pm$0.33 & 301.2$\pm$109.6 & \textbf{9.29$\pm$0.95} & 8.98$\pm$3.57 & 501.0$\pm$203.9 \\
HERO & 1.43$\pm$0.09 & 0.82$\pm$0.33 & 515.8$\pm$205.9 & 4.73$\pm$0.58 & 4.59$\pm$1.05 & 761.3$\pm$173.1 & 23.21$\pm$1.06 & 39.30$\pm$0.12 & 1280.0$\pm$0.0 \\
SIDR & 1.19$\pm$0.14 & 0.35$\pm$0.37 & 266.9$\pm$279.7 & 4.17$\pm$0.28 & 4.01$\pm$1.63 & 686.4$\pm$277.6 & 22.87$\pm$1.22 & 21.39$\pm$19.75 & 642.8$\pm$592.2 \\
CR-MGC & 511.93$\pm$384.61 & 0.58$\pm$0.38 & 94.4$\pm$53.7 & 556.10$\pm$130.87 & 1.28$\pm$0.21 & 238.6$\pm$63.8 & 4901.74$\pm$1532.01 & 17.18$\pm$3.87 & 680.9$\pm$182.5 \\
DEMD & 2862.48$\pm$620.27 & 2.87$\pm$0.65 & 100.1$\pm$61.1 & 2792.71$\pm$1262.92 & 3.40$\pm$1.05 & 223.7$\pm$70.4 & 16032.39$\pm$192.72 & 24.88$\pm$1.94 & 493.9$\pm$109.2 \\
GDR-TS & 9185.78$\pm$71.61 & 9.28$\pm$0.10 & 119.1$\pm$58.2 & 14404.36$\pm$427.61 & 15.29$\pm$0.46 & 287.3$\pm$31.8 & 69241.05$\pm$4248.93 & 80.73$\pm$4.50 & 641.6$\pm$51.7 \\
CR-MA & 14.96$\pm$0.26 & 3.27$\pm$2.27 & 201.2$\pm$139.5 & 46.75$\pm$1.02 & 24.86$\pm$11.13 & 487.9$\pm$218.4 & 247.65$\pm$1.57 & 241.10$\pm$62.62 & 924.1$\pm$239.5 \\
\bottomrule
\end{tabular}
}
\end{table*}

\subsection{Runtime Overhead Analysis}

We finally evaluate the computational overhead of PhyGAIL and compare it with representative baselines. Table~\ref{tab:runtime_results} lists the first-response latency, end-to-end solving time, and recovery steps of all methods under the large-scale settings $N=\{100,200,500\}$.

PhyGAIL keeps a low first-response latency of $4.02$, $7.42$, and $17.61$ ms for $N=\{100,200,500\}$, respectively, which remains far below the other learning-based baselines. The same advantage appears at the system level: PhyGAIL achieves the shortest solving time at $N=200$ and $N=500$, and stays close to the shortest at $N=100$ ($0.14$ s versus $0.08$ s for center-fly). Although center-fly is lighter per step, it requires many more recovery steps, especially at large scale, so its end-to-end solving time becomes larger. Overall, PhyGAIL provides the best runtime--performance trade-off among the compared methods for large-scale decentralized recovery.

The growth of measured response time with swarm size does not conflict with the theoretical $\mathcal{O}(1)$ complexity in Section~IV-E. The complexity analysis characterizes the local decision cost of one UAV under bounded local observations and bounded graph size, whereas the benchmarked latency is measured from the current centralized simulator implementation, which also includes subnet extraction, neighbor search, graph construction, batching, and action write-back for all active subnets. The observed latency growth therefore mainly reflects implementation overhead at the simulator level rather than a violation of the constant per-agent complexity result, and from the TMC perspective this bounded local processing is valuable because fragmented networks require fast recovery decisions before a whole-swarm topology view can be reassembled.

\section{Conclusion}

This paper considered decentralized connectivity restoration for UAV swarm networks under communication network split caused by large-scale node failures. We developed PhyGAIL, a CTDE-based framework that combines bounded heterogeneous graph perception, physics-informed graph interaction, and scenario-adaptive imitation learning for large-scale recovery. Its core PhyGNN module models local attraction and repulsion through gated message passing, while the accompanying structural analysis establishes bounded graph propagation, bounded pseudo-force generation, and controlled variance of the expert-normalized terminal reward. Simulation results showed that a policy trained on 20-UAV swarms transfers directly to swarms of up to 500 UAVs without fine-tuning, and achieves the best overall balance among reconnection reliability, recovery efficiency, motion safety, and runtime overhead. These results support PhyGAIL as a scalable solution for fragmented network settings in which post-damage information collection and per-agent online processing must both remain limited.

The present study still assumes a common 2-D flight layer, disk-like communication abstraction, a single damage event, and a shared pre-failure reference for decentralized recovery. Future work can consider more realistic deployment conditions, including continuous failures, obstacle-constrained environments, and tighter onboard resource limits, as well as broader continuous-space multi-agent coordination problems beyond connectivity restoration.

\appendices

\section{Proof of Proposition~\ref{prop:bounded_spectral_norm}}
\label{app:proof_spectral_norm}

\begin{IEEEproof}
In the directed perception graph $\mathcal{G}_{local}$, an edge $e_{j \to i}$ exists if node $u_i$ selects $u_j$ for information aggregation.

\emph{i. Algorithmic in-degree bound:}
The in-degree of node $u_i$ equals the number of neighbors from which it pulls information. By construction, $u_i$ can aggregate messages from at most $K_{act}$ active UAVs, $K_{dmg}$ damaged UAVs, and one virtual center. Therefore,
\begin{equation}
    \|\tilde{\bm{A}}\|_\infty \le K_{act} + K_{dmg} + 1.
\end{equation}

\emph{ii. Physical out-degree bound:}
The out-degree of node $u_j$ equals the number of agents that may select it as a neighbor. Inside a communication disk of radius $D_{comm}$, the number of UAVs that can coexist while respecting the minimum safety distance $D_{safe}$ is bounded by geometric packing. Hence,
\begin{equation}
    \|\tilde{\bm{A}}\|_1 \le C_{pack} = \mathcal{O}\left((D_{comm}/D_{safe})^d\right).
\end{equation}

For any asymmetric real matrix,
\begin{equation}
    \|\tilde{\bm{A}}\|_2 \le \sqrt{\|\tilde{\bm{A}}\|_1 \|\tilde{\bm{A}}\|_\infty}.
\end{equation}
Substituting the two bounds above gives
\begin{equation}
    \|\tilde{\bm{A}}\|_2 \le \sqrt{(K_{act} + K_{dmg} + 1)\cdot C_{pack}},
\end{equation}
which is constant with respect to the global swarm size $N$.
\end{IEEEproof}

\section{Proof of Proposition~\ref{prop:bounded_force}}
\label{app:proof_bounded_force}

\begin{IEEEproof}
The localized observation $\mathcal{O}_i(t)$ consists of normalized relative positions, velocities clipped by $v_{\max}$, and structural degrees mapped through a logarithmic function. These components lie in closed and bounded intervals, so the input space $\mathcal{X}$ is compact.

By the extreme value theorem, a continuous mapping on a compact set is bounded. The MLP layers in PhyGNN have finite parameters and continuous activations (ReLU, Sigmoid, and Softplus). Therefore, the interaction feature $\bm{f}_{ij}$ and the interaction strength $S_{ij}$ are bounded by constants $C_f$ and $S_{\max}$, respectively. Each directed message then satisfies
\begin{equation}
    \|\bm{m}_{j\rightarrow i}\| \le S_{\max} C_f.
\end{equation}

Proposition~\ref{prop:bounded_spectral_norm} bounds the number of incoming messages by $\Delta_{in}^{\max}$. Hence,
\begin{equation}
    \|\bm{m}_i\| \le \Delta_{in}^{\max} S_{\max} C_f.
\end{equation}
Therefore, the aggregated pseudo-force remains bounded.
\end{IEEEproof}

\section{Proof of Proposition~\ref{prop:variance_decoupling}}
\label{app:proof_variance_decoupling}

\begin{IEEEproof}
Because $\eta(t)$ is clipped to $[0,\eta_{\max}]$, the terminal success reward lies in the interval
\begin{equation}
    [r_{base},\; r_{base}+r_{speed}\eta_{\max}],
\end{equation}
whose length is $L=r_{speed}\eta_{\max}$.

Popoviciu's inequality states that any random variable bounded in an interval of length $L$ has variance at most $L^2/4$. Applying it here gives
\begin{equation}
    \mathrm{Var}(r_{term}^{+}) \le \frac{1}{4}(r_{speed}\eta_{\max})^2.
\end{equation}
\end{IEEEproof}

\bibliographystyle{IEEEtran}
\balance
\bibliography{IEEEabrv,main}


 




\vfill

\end{document}